\newcommand\myeq{\mathrel{\stackrel{\makebox[0pt]{\mbox{\normalfont\tiny def}}}{=}}}
\theoremstyle{plain}
\newtheorem{theorem}{Theorem}[section]
\newtheorem{proposition}[theorem]{Proposition}
\newtheorem{lemma}[theorem]{Lemma}
\newtheorem{corollary}[theorem]{Corollary}
\theoremstyle{definition}
\newtheorem{definition}[theorem]{Definition}
\theoremstyle{remark}
\icmltitlerunning{On the Tractability of SHAP Explanations 
           under Markovian Distributions}
\begin{document}

\twocolumn[
\icmltitle{On the Tractability of SHAP Explanations 
           under Markovian Distributions}



\icmlsetsymbol{equal}{*}

\begin{icmlauthorlist}
\icmlauthor{Reda Marzouk}{yyy}
\icmlauthor{Colin de la Higuera}{yyy}
\end{icmlauthorlist}

\icmlaffiliation{yyy}{LS2N, Universit\'e de Nantes, France}

\icmlcorrespondingauthor{Reda Marzouk}{mohamed-reda.marzouk@univ-nantes.fr}
\icmlkeywords{SHAP score, Weighted Automata, Disjoint DNFs, Decision Trees, Markovian distributions, explainable ML}

\vskip 0.3in
]



\printAffiliationsAndNotice{} 

\begin{abstract}
Thanks to its solid theoretical foundation, the SHAP framework is arguably one the most widely utilized frameworks for local explainability of ML models. Despite its popularity, its exact computation is known to be very challenging, proven to be NP-Hard in various configurations. Recent works have unveiled positive complexity results regarding the computation of the SHAP score for specific model families, encompassing decision trees, random forests, and some classes of boolean circuits. Yet, all these positive results hinge on the assumption of feature independence, often simplistic in real-world scenarios. In this article, we investigate the computational complexity of the SHAP score  by relaxing this assumption and introducing a Markovian perspective. We show that, under the Markovian assumption, computing the SHAP score for the class of Weighted automata, Disjoint DNFs and Decision Trees can be performed in polynomial time, offering a first positive complexity result for the problem of SHAP score computation that transcends the limitations of the feature independence assumption.
\end{abstract}

\section{Introduction}
Since its introduction in the seminal paper \cite{lundberg2017}, the local explanatory (\textbf{SH}apley \textbf{A}dditive ex\textbf{P}lanations) SHAP method gained increasing popularity in the field of interpretable ML. Nevertheless, one of its main limitations pertains to its computational intractabilty: In general, computing the SHAP score is NP-Hard \cite{bertossi2020causality, vander21}.  
Recent studies have shown positive results regarding the tractability of computing the SHAP score under specific configurations. In particular, \cite{lundbergnature} proposed a polynomial-time algorithm, known as TreeSHAP, that purported to compute exactly the SHAP score for tree-based models. However, subsequent research \cite{vander21, arenas23} has identified flaws in the main claim of TreeSHAP. Indeed, other works have demonstrated that the TreeSHAP algorithm is an implementation of \textit{interventional} SHAP, another variant of the SHAP score \cite{janzing20a}. 
A more rigorous proof for the tractability of the original SHAP score for various families of boolean functions has been provided in \cite{vander21} when features are assumed to be independent. \cite{arenas23} extended these positive results to cover the family of Decomposable Deterministic circuits which includes the family of decision trees among other classes of boolean circuits.

All the tractability results reported in the literature are, however, derived under the feature independence assumption. Although practical for its simplicity, this assumption is often irrealistic in real-case scenarios. A slight relaxation of this assumption has been examined in \cite{vander21} through the lens of complexity theory by considering the family of na\"ive bayes models and empirical distributions. Computing the SHAP score for the family of decision trees under this relaxed assumption has been proven to be \#P-Hard in both these settings.  

Between independent distributions and latent variable models, an intermediate class of distributions that hasn't been explored yet is the class of Markovian distributions.  Markovian distributions constitute an interesting class of distributions that incorporate a degree of feature correlation often considered sufficient to model various stochastic phenomena \cite{bassler2006markov, vanKampen2007, goutsias13}.  

Previous works examining the complexity of computing the SHAP score were mostly directed towards families of boolean functions. In this article, we shift our focus to sequential models, in particular the family of weighted automata (WAs). WAs offer a powerful formalism for modeling sequential tasks and encompass a large family of classical models, including Deterministic and Non deterministic finite automata, Hidden Markov Models, and has been shown to be equivalent to second-order linear RNNs \cite{rabusseau19}. They have been employed in various applications, such as NLP \cite{knight2009applications}, speech processing \cite{pereira1996speech, mohri2008speech} and image processing \cite{CULIK1993305}. 

Recently, a line of works proposed WAs as \textit{proxy} interpretation models for neural models \cite{okoduno19, eyraud20, weiss, rabusseau}. All these works are motivated by the implicit assumption that WAs enjoy better transparency than their neural counterparts. However, the existing litterature lacks a formal argument to substantiate this claim. One of the primary motivations of this work is to shed some light on this issue. 

The work presented in this article will primarily address the original formulation of the SHAP score, as introduced by \cite{lundberg2017} in their seminal paper. This specific variant of the SHAP score has proven to be particularly challenging from a computational viewpoint, and extends, up to a distributional shift, other variants such as the baseline SHAP \cite{sundararajan20b}. It's worth noting, however, that the axiomatic basis of the original SHAP score has been recurrently disputed in the academic discourse with several works discussing its limitation to capture elementary desirable properties of local models' explanations \cite{janzing20a, sundararajan20b,Huang2023TheIO}. 

The main results presented in this article are given as follows:
\vspace{-0.2cm}
\begin{enumerate}
\item A constructive proof showing that the computation of the SHAP score for the class of WAs is tractable under the assumption that the background data generating distribution is Markovian (section \ref{results:ffdist}).
 \item Under the same assumption, a constructive proof of the tractability of computing the SHAP score for the class of disjoint DNFs and the family of decision trees (section \ref{results:reduction}).
\end{enumerate}
 \section{Background} \label{background}
 For a given integer $n > 0$, we denote by $[n]$ the set of all integers from $1$ to $n$. The indicator function of a set $X$ shall be denoted as $I_{X}$. Recall that an indicator function of a subset $X$ in $\mathcal{X}$ is a binary-valued function that assigns the value 1 to $x \in X$, 0 otherwise. 

 A computational function problem $f: \mathcal{I} \rightarrow \mathbb{R}$, where $\mathcal{I}$ is referred to as the set of instances, is in FP if it can be computed exactly using an algorithm that runs in time polynomial in the size of the instance. 
 
$\bullet$ \textbf{Languages and seq2seq languages.} 
Let $\Sigma$ be a finite alphabet. The elements of $\Sigma$ will be referred to as symbols. $\Sigma^{*}$ (resp. $\Sigma^{\infty}$) denotes the set of all finite (resp. infinite) sequences formed by $\Sigma$. For a given sequence $w \in \Sigma^{*}$, we denote by $|w|$ its length, $w_{i:j}$ the subsequence of $w$ that spans from the $i$-th symbol to the $j$-th symbol in $w$, and $w_{i}$ to refer to its i-th symbol. A language $f$ is a mapping from $\Sigma^{*}$ to $\mathbb{R}$. When the image of a language $f$ is binary, then it will be called \textit{unweighted}, in which case the language represents a subset of $\Sigma^{*}$ equal to $L_{f} = f^{-1}(\{1\})$. We extend the definition of languages to cover unweighted languages over $\Sigma^{*}$, by allowing the notation: $f(L) \myeq \sum\limits_{w \in L} f(w)$ (if it exists) for an unweighted language $L$. An analogous concept of a language is the concept of a \textit{seq2seq language}. For two finite alphabets $\Sigma$ and $\Delta$, a seq2seq language is a mapping from $\Sigma^{*} \times \Delta^{*}$ to $\mathbb{R}$. 

 When a language (or, a seq2seq language) $f$ is computed by a model  $M$, such as a weighted automaton (WA) or a weighted transducer (WT), we shall use the notation $f_{M}$ to designate the language (or, seq2seq language) computed by $M$. 

$\bullet$ \textbf{Operators over languages/seq2seq languages.} In this article, three operators over languages will be useful in our analysis. We shall briefly define them in the following:
\begin{enumerate}
    \item \textbf{The product operator:} The product operator, also known as \textit{the hadamard product \cite{Droste2009, Mohri2004}}, takes two languages $f,~g$ over $\Sigma^{*}$ and outputs the product language $f \cdot g$. We shall employ the notation $f \otimes g$ to refer to the product language of $f$ and $g$.
    \item \textbf{The partition constant operator:} The partition constant operator takes a language $f$ over $\Sigma^{*}$, an integer $n$, and outputs the quantity $f(\Sigma^{n}) = \sum\limits_{w \in \Sigma^{n}} f(w)$. The partition constant operation of a language $f$ at the support $n>0$ will be denoted as $|f|_{n}$.
    \item \textbf{The projection operator:} The projection operator takes as input a language $f$ over $\Sigma^{*}$ and a seq2seq language $g$ over $\Sigma^{*} \times \Delta^{*}$ and outputs a language $h$ over $\Delta^{*}$ given as
    $$h(u) = \sum\limits_{w \in \Sigma^{|u|}} f(w) \cdot g(w,u)$$
    In the sequel, we shall use the notation $\Pi(f,g)$ to refer to the projection operator.
\end{enumerate}

$\bullet$ \textbf{Patterns.} For an alphabet $\Sigma$, a pattern $p$ is a regular expression that takes the form: $\Sigma^{i_{1}}w_{1} \ldots \Sigma_{i_{n}}w_{n}\Sigma^{i_{n+1}}$, where $\{i_{k}\}_{k \in [n+1]}$ is a set of integers, and $\{w_{k}\}_{k \in [n+1]}$ is a collection of sequences over $\Sigma^{*}$. The language accepted by a pattern $p$ shall be denoted $L_{p}$. 
Analogous to sequences, the symbol $|p|$ will refer to its length. In addition, $|p|_{\#}$ will denote the number of occurrences of the symbol $\Sigma$ in $p$. 

In this article, the pattern formalism will be employed to represent coalitions of features in the SHAP score formula. Often, they shall be treated as sequences formed by an extended alphabet $\Sigma_{\#} = \Sigma \cup \{\#\}$, where $\#$ is a special symbol that replaces the symbol $\Sigma$ present in the regular expression associated to a pattern $p$. For example, the pattern $p = \Sigma 00 \Sigma^{2}$ over the binary alphabet $\Sigma = \{0,1\}$ is represented by the sequence $p = \# 00 \# \#$ over $\Sigma_{\#}$.

By treating patterns as sequences over $\Sigma_{\#}^{*}$, we can describe languages over patterns in the usual way. In particular, the following languages over patterns will be used in the remainder of this article. Given a sequence $w \in \Sigma^{*}$ and an integer $k \in [|w|]$, define the (unweighted) language over $\Sigma_{\#}^{*}$ as:
$$\mathcal{L}_{k}^{w} \myeq \{p \in \Sigma_{\#}^{|w|}:  ~~ w \in L_{p} \land |p|_{\#} = k
\}$$
The uniform distribution over the set $\mathcal{L}_{k}^{w}$ will be referred to as $\mathcal{P}_{k}^{w}$, and the language $\bigcup\limits_{k \in [|w|]} \mathcal{L}_{k}^{w}$ as $\mathcal{L}^{w}$.

A final operation over patterns that will appear in the reformulation of the SHAP score formula introduced later in this section is the $\texttt{swap}$ operation. Given a pattern $p \in \Sigma_{\#}^{*}$, an integer $i \in [|w|]$, $\texttt{swap}(p,i)$ refers to the (perturbed) pattern $p'$ generated by replacing the $i$-th element of $p$ with $\#$. For example, $\texttt{swap}(\#\#00\#1, 3) = \#\#\#0 \#1$.   

$\bullet$ \textbf{Markovian distributions.}
    Formally, a Markovian probability distribution $P$ over $\Sigma^{\infty}$ is parametrized by $<P_{init}, \{P_{n}\}_{n > 0}>$, where $P_{init}$ is a probability distribution over $\Sigma$, and for any integer $n > 0$, $P_{n}$ is a stochastic matrix \footnote{A stochastic matrix is a positive matrix such that the sum of its row elements is equal to 1}  in $|\Sigma| \times |\Sigma|$. For an integer $n > 0$, $P$ induces a probability distribution over the support $\Sigma^{n}$, denoted $P^{(n)}$, such that for any sequence $w \in \Sigma^{n}$:
    $$P^{(n)}(w) \myeq P(w \Sigma^{\infty}) = P_{init}(w_{1}) \cdot \prod\limits_{i=1}^{n-1} P_{i}[w_{i},w_{i+1}]$$
    We shall abuse notation and use $P_{i}(\sigma'|\sigma)$ instead of $P_{i}[\sigma,\sigma']$ interpreted as the probability of generating the symbol $\sigma'$ at position $i+1$ conditioned on the generation of the symbol $\sigma$ at position $i$. When there is no confusion of the support of the distribution, we shall omit the subscript from the notation $P^{(n)}$.
    
 For computational considerations, we constrain the family of Markovian distributions to those whose set of parameters can be efficiently queried:
\begin{definition} \label{assump}
    A Markovian distribution over $\Sigma^{\infty}$ is polynomial-time computable if there exists an algorithmic  procedure that takes as input an integer $n>0$, runs in $O(\texttt{poly}(n,|\Sigma|))$ and outputs the transition matrix $P_{n}$.
\end{definition}

As a notable example, the probability distribution generated by the class of 1-gram models is trivially polynomial-time computable. The set of polynomial-time computable Markovian distributions will be denoted $\texttt{MARKOV}$. When $P \in \texttt{MARKOV}$ is given as an input instance to a computational function problem, it refers to a machine that implements the algorithmic procedure defined implicitly in  definition \ref{assump}.

An additional technical assumption on Markovian distributions considered in this article is that all elements of their stochastic matrices and $P_{init}$ are greater than $0$.

\subsection{Weighted Automata/Transducers} \label{background:wa-wt}
$\bullet$ \textbf{Weighted Automata.} Weighted Automata (WAs) extend the classical family of finite automata accepting unweighted languages by allowing transitions to be endowed with weights, construed as probabilities, costs, or scores depending on the application at hand. A linear representation of WAs is formally defined as follows:
\begin{definition}{(\cite{franc})}
    Let $\Sigma$ be an alphabet and $n>0$ be an integer. A WA $A$ over $\Sigma^{*}$ is represented by a tuple $<\alpha, \{A_{\sigma}\}_{\sigma \in \Sigma}, \beta>$ where $A_{\sigma} \in \mathbb{R}^{n \times n}$ is the transition matrix associated to a symbol $\sigma$ in $\Sigma$, and $\alpha$ (resp. $\beta$) are vectors in $\mathbb{R}^{n}$ that represent the initial (resp. final) vectors. The integer $n$ is called the size of $A$, denoted $\texttt{size}(A)$.
\end{definition}
A WA $A = <\alpha, \{A_{\sigma}\}_{\sigma \in \Sigma}, \beta>$ over $\Sigma^{*}$ computes the language 
$$f_{A}(w) = \alpha^{T} \cdot A_{w} \cdot \beta$$
where $A_{w} \myeq \prod\limits_{i=1}^{|w|}A_{w_{i}}$.

$\bullet$ \textbf{Weighted transducers.} Weighted transducers (WTs) represent the analogous version of WAs adapted to model seq2seq languages. It has been employed in applications including speech processing \cite{mohri2008speech,lehr2010learning}, machine translation \cite{kumar2006weighted} and image processing \cite{culik1995weighted}

Analogous to WAs, WTs admit a linear representation given as follows:
 \begin{definition}
  Let $\Sigma,~\Delta$ be two finite alphabets and $n>0$ be an integer. A WT $T$ over $\Sigma^{*} \times \Delta^{*}$ is represented by the tuple $<\alpha, \{A_{\sigma}^{\sigma'}\}_{(\sigma,\sigma') \in \Sigma \times \Delta}, \beta>$, where $\alpha \in \mathbb{R}^{n}, A_{\sigma}^{\sigma'} \in \mathbb{R}^{n \times n},~\beta \in \mathbb{R}$. 
  The integer $n$ is called the size of $T$, denoted $\texttt{size}(T)$. 
\end{definition}
 A WT $T = <\alpha, \{A_{\sigma}^{\sigma'}\}_{\sigma \in \Sigma, \sigma' \in \Delta}, \beta>$ over $\Sigma^{*} \times \Delta^{*}$ computes the seq2seq language
$$f_{T}(w,u) = \alpha^{T} \cdot \prod\limits_{i=1}^{|w|} A_{w_{i}}^{u_{i}} \cdot \beta$$
where $(w,u) \in \Sigma^{*} \times \Delta^{*}$ such that $|w| = |u|$.

 Earlier in this section, we introduced three operators over languages/seq2seq languages, namely the product operator, the partition constant and the projection operator. The algorithmic construction we shall furnish in later sections to compute the SHAP score will involve performing a sequence of these operations over languages/seq2seq languages described by WAs/WTs whose parametrization will depend on the input instance of the problem. 

The following provides a technical lemma proving the computational efficiency of implementing these operators over languages/seq2seq languages represented by WAs/WTs. 
\begin{lemma} \label{operators} 
    Fix two finite alphabets $\Sigma,~\Delta$.
    \begin{enumerate}
        \item \textbf{The product operator.} There exists an algorithm that takes as input two WAs $A,~B$, runs in $O(\texttt{poly}(\texttt{size}(A), \texttt{size}(B), |\Sigma|))$ and outputs a WA $A \otimes B$ that computes the product language $f_{A} \otimes f_{B}$.
        \item \textbf{The partition constant operator.} There exists an algorithm that takes as input a WA $A$ and an integer $n>0$, runs in $O(\texttt{poly}(\texttt{size}(A), n, |\Sigma|))$ and outputs $|f_{A}|_{n}$.
        \item \textbf{The projection operator.} There exists an algorithm that takes as input a WA $A$, a WT $T$, runs in $O(\texttt{poly}(\texttt{size}(A), \texttt{size}(T), |\Sigma|)$ and outputs the language $\Pi(f_{A},f_{T})$.
    \end{enumerate}
\end{lemma}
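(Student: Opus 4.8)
The plan is to realize each of the three operators by an explicit construction on linear representations and then to read off the claimed size and running-time bounds; all three constructions are classical in the weighted-automata and weighted-transducer literature \cite{Droste2009, Mohri2004}, so the real task is only to phrase them in the form the later sections need and to make the polynomial bounds compose. The single engine behind all of them is the Kronecker product of matrices and vectors, which I will write $M \boxtimes N$, together with the mixed-product identity $(M_1 M_2)\boxtimes(N_1 N_2) = (M_1\boxtimes N_1)(M_2\boxtimes N_2)$, valid whenever the shapes are compatible.

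\textbf{Product and partition constant.} For the product operator, given $A=\langle\alpha,\{A_\sigma\}_{\sigma\in\Sigma},\beta\rangle$ of size $n$ and $B=\langle\gamma,\{B_\sigma\}_{\sigma\in\Sigma},\delta\rangle$ of size $m$, I would take $A\otimes B\myeq\langle\alpha\boxtimes\gamma,\{A_\sigma\boxtimes B_\sigma\}_{\sigma\in\Sigma},\beta\boxtimes\delta\rangle$, a WA of size $nm$; pushing the Kronecker product through the word-indexed matrix product via the mixed-product identity gives $(A\otimes B)_w = A_w\boxtimes B_w$, hence $f_{A\otimes B}(w) = (\alpha^{T}A_w\beta)(\gamma^{T}B_w\delta) = f_A(w)\,f_B(w)$, and forming the $|\Sigma|$ matrices $A_\sigma\boxtimes B_\sigma$ costs $O(|\Sigma|(nm)^2)$. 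For the partition constant, set $\bar A\myeq\sum_{\sigma\in\Sigma}A_\sigma$; distributing the sum over $w\in\Sigma^{n}$ position by position in $\sum_{w}\alpha^{T}A_w\beta$ collapses it to $|f_A|_n = \alpha^{T}\bar A^{\,n}\beta$, which is evaluated by forming $\bar A$ in time $O(|\Sigma|\cdot\texttt{size}(A)^2)$ and then applying it $n$ times to $\beta$ (or by $O(\log n)$ repeated squarings), i.e. in $O(\texttt{poly}(\texttt{size}(A),n,|\Sigma|))$.

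\textbf{Projection.} Given $A$ of size $n$ and a WT $T=\langle\gamma,\{T_\sigma^{\sigma'}\}_{(\sigma,\sigma')\in\Sigma\times\Delta},\delta\rangle$ of size $m$, I would output the WA over $\Delta^{*}$ of size $nm$ with initial vector $\alpha\boxtimes\gamma$, final vector $\beta\boxtimes\delta$, and one transition matrix per output symbol, $C_{\sigma'}\myeq\sum_{\sigma\in\Sigma}A_\sigma\boxtimes T_\sigma^{\sigma'}$ for $\sigma'\in\Delta$. Writing $T_w^{u}\myeq\prod_{i=1}^{|w|}T_{w_i}^{u_i}$, expanding $\prod_{i=1}^{|u|}C_{u_i}$ into a sum over $w\in\Sigma^{|u|}$ and applying the mixed-product identity term by term yields $\prod_{i=1}^{|u|}C_{u_i} = \sum_{w\in\Sigma^{|u|}}A_w\boxtimes T_w^{u}$, so the new WA computes $u\mapsto\sum_{w\in\Sigma^{|u|}}(\alpha^{T}A_w\beta)(\gamma^{T}T_w^{u}\delta) = \Pi(f_A,f_T)(u)$, as required. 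Assembling the $|\Delta|$ matrices $C_{\sigma'}$, each a sum of $|\Sigma|$ Kronecker products of size $(nm)\times(nm)$, costs $O(|\Delta|\,|\Sigma|\,(nm)^2)$.

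\textbf{Main obstacle.} None of these arguments is conceptually deep; the point requiring the most care is the projection construction, where one must keep straight that the summation runs over the shared input symbol $\sigma$ while the output symbol $\sigma'$ labels the transitions of the resulting automaton — swapping these roles would contract the wrong index and compute the wrong seq2seq quantity. The second thing to watch is that the size and time bounds be stated uniformly in the $\texttt{size}(\cdot)$ parameters and in $|\Sigma|,|\Delta|$, so that the later SHAP-score algorithm, which chains a polynomial number of these operators over instance-dependent WAs/WTs, still runs in FP.
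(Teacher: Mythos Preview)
Your proposal is correct and follows essentially the same approach as the paper: all three operators are realized via the Kronecker product on linear representations, with the mixed-product identity justifying correctness, and the projection WA has transition matrices $C_{\sigma'}=\sum_{\sigma\in\Sigma}A_\sigma\boxtimes T_\sigma^{\sigma'}$ exactly as in the paper's Proposition~A.4. The only cosmetic differences are notation ($\boxtimes$ versus $\otimes$) and your slightly more explicit accounting of the $|\Delta|$ factor in the projection cost.
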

The proof of lemma \ref{operators} can be found in appendix \ref{app:operators}.

\subsection{The SHAP score.} \label{background:shapscore}
 Stemming its root from the field of cooperative game theory \cite{deng1994complexity}, the SHAP framework is built on top of an analogy between cooperative games and the local explainability problem of ML models. A cooperative game is described by a set of players $N$ and a value function $v$ that assigns a \textit{generated wealth} for each subset of players, referred to as a \textit{coalition},  cooperating in the game. By analogy, in the context of explainable ML, the players are the input features of a ML model subject to explanatory analysis. And, the value assigned to a coalition is equal to the the expected model's output conditioned on the event that the features forming the coalition possess a value equal to the instance to explain. 

Similar to Shapley's original cooperative game theory \cite{shapley1953value}, the SHAP explainability method offers at its core a formal characterization of a fair distribution mechanism across input features that reflects their respective degree of contribution to the generated model's output for a given instance to explain, culminating in what's commonly known as the SHAP score.

Formally, let $M$ be a model that computes a function $f_{M}$ from a discrete set $\mathcal{X} = \mathcal{X}_{1} \times \ldots \times \mathcal{X}_{n}$ to $\mathbb{R}$, and $P$ be a probability distribution over $\mathcal{X}$. For an input $x \in \mathcal{X}$, and an integer $i \in [n]$, the original SHAP score assigned to the $i$-th feature for the instance $x$ is given as \cite{lundberg2017}:
\vspace{-0.1cm}
\begin{small}
\begin{align} \label{shapformula}
    \text{SHAP}(M,x,i,P) \myeq &  \sum\limits_{S \subseteq [n]} \frac{|S|! (n - |S| - 1)!}{n!} \cdot  \\ \nonumber
    &\left[ v(S; M,x,P)- v(S \setminus \{i\}; M,x,P) \right]
\end{align}
\end{small}
where for a subset $S \subseteq [n]$, the value function $v$ is defined as 
\begin{equation} \label{valuefunction}
    v(S;M,x,P) \myeq \mathbb{E}_{X \sim P}[f_{M}(X) | X_{S} = x_{S}]
\end{equation}

We propose an alternative formulation of the SHAP score formula tailored to better suit sequential models computing languages. For an alphabet $\Sigma$, a model $M$ that computes a language over $\Sigma^{*}$, a probability distribution $P$ over $\Sigma^{\infty}$, a string $w \in \Sigma^{*}$ and an integer $i \in [|w|]$. The SHAP value assigned to the symbol $w_{i}$ in $w$ is given as: 
\begin{small}
\begin{align} \label{shapformulaseq}
    \text{SHAP}(M,w,i,P) = \sum\limits_{k=1}^{|w|-1}  &\frac{1}{|w| - k}  \mathbb{E}_{p \sim \mathcal{P}_{|w| - k}^{w}} [V(p; M,w,P) \\ \nonumber
   & - V(\texttt{swap}(p,i); M,w,P)] 
\end{align}
\end{small}
where 
\begin{equation} \label{value}
    V(p; M,w,P)  \myeq \mathbb{E}_{w' \sim P^{|w|}} \left[f_{M}(w')| w' \in L_{p} \right]
\end{equation}
The main idea behind this reformulation consists at modeling coalitions as patterns. For example, for a sequence $w = abbaa$ over the alphabet $\Sigma=\{a,b\}$, and $k=2$, the pattern $p = \#b\#a\# $ in $\mathcal{L}_{3}^{w}$ coincides with the coalition of size $2$ formed by the second and the forth symbol of $w$. 

The faithfulness of the SHAP value formula given in \eqref{shapformulaseq} to the one in \eqref{shapformula} (for the case of sequential models) can be checked
by decomposing the summation in the original formulation of the SHAP score (equation \eqref{shapformula}) over coalitions of the same size, and by noting that for $k \in [|w| - 1]$, and a pattern $p \in \mathcal{L}_{|w| - k}^{w}$, we have
$$\mathcal{P}_{|w| - k}^{w}(p) = \frac{1}{|L_{|w| - k}^{w}|} = \frac{k! \cdot (|w| - k)!}{|w|!} $$

In the sequel, whenever the $\text{SHAP}$ score formula is mentioned, it shall refer to the one tailored for sequential models using the pattern formalism (equation \eqref{shapformulaseq}). To avoid confusion between models computing languages and boolean functions treated in section \ref{ddnfstowa}, we shall use the notation $\overrightarrow{\text{SHAP}}$ for this latter case.

The formal definition of the \textit{meta-}computational problem associated to SHAP score is given as follows:

Fix an alphabet $\Sigma$. Let $\mathcal{M}$ be a class of sequential models that compute languages over $\Sigma^{*}$, and $\mathcal{P}$ is a class of probability distributions over $\Sigma^{\infty}$. The computational \textit{meta-}problem associated to the SHAP score is given formally as follows: 

$\bullet$ \textbf{Problem:} $\texttt{SHAP}(\mathcal{M}, \mathcal{P})$\\
$~~$\textbf{Instance:} $M \in \mathcal{M}$, a sequence  $w \in \Sigma^{*}$, an integer $i \in [|w|]$, and $P \in \mathcal{P}$ \\ 
$~~$\textbf{Output:} Compute $\text{SHAP}(M,w,i,P)$

The next section is dedicated to the examination of the computational complexity of the particular instance of this problem where $\mathcal{M} = \texttt{WA}$ and $\mathcal{P} = \texttt{MARKOV}$, namely $\texttt{SHAP}(\texttt{WA}, \texttt{MARKOV})$. 

 \section{The problem $\texttt{SHAP}(\texttt{WA}, \texttt{MARKOV})$ is in \emph{FP}.} \label{results:ffdist}
The main result of the article is stated in the following theorem:
\begin{theorem} \label{ffdist:mainthm}
    The computational problem $\texttt{SHAP}(\texttt{WA}, \texttt{MARKOV})$  is in \emph{FP}.
\end{theorem}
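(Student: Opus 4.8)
The plan is to start from the sequential reformulation \eqref{shapformulaseq} and reduce the whole computation to evaluating, for each coalition size $k$, the aggregate conditional expectations $T_k := \sum_{S \subseteq [|w|],\, |S| = k,\, i \in S} h(S)$ and $T'_k := \sum_{S' \subseteq [|w|],\, |S'| = k-1,\, i \notin S'} h(S')$, where $n := |w|$ and $h(S) := \mathbb{E}_{w' \sim P^{(n)}}[f_M(w') \mid w'_S = w_S]$. Indeed, decomposing the outer sum of \eqref{shapformulaseq} by coalition size, using $\mathcal{P}_{n-k}^{w}(p) = k!(n-k)!/n! = 1/\binom{n}{k}$, and noting that $\texttt{swap}(p,i)$ changes $V(p)$ only when $i$ is a non-$\#$ position of $p$, one obtains $\text{SHAP}(M,w,i,P) = \sum_{k=1}^{n-1} \frac{1}{(n-k)\binom{n}{k}}\,(T_k - T'_k)$. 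Since $n$ is part of the instance and $P \in \texttt{MARKOV}$ lets us query every $P_j$ with $j \le n$ in polynomial time, it suffices to compute all these aggregates in time $\texttt{poly}(n,|\Sigma|,\texttt{size}(M))$.

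The core step is to factor $h(S)$, for a fixed coalition $S = \{s_1 < \dots < s_k\}$, into a product of small matrices indexed by the consecutive ``checkpoints'' of $S$. Representing $P^{(n)}$ as a (position-dependent) weighted automaton $\mathcal{A}_P$ with state space $\Sigma$, the Hadamard product $\mathcal{A}_P \otimes M$ (Lemma \ref{operators}, item 1) computes $w' \mapsto P^{(n)}(w')\,f_M(w')$; summing its weight over all $w'$ that agree with $w$ on $S$ — i.e. replacing, at every free position, the symbol-transition matrix by its sum over $\Sigma$, a marginalization of the kind performed by the projection operator (Lemma \ref{operators}, item 3) — expresses the numerator $N(S) := \sum_{w':\, w'_S = w_S} P^{(n)}(w') f_M(w')$ as $\hat\mu(s_1)\,\hat R(s_1,s_2)\cdots \hat R(s_{k-1},s_k)\,\hat\nu(s_k)$, a product of a row vector, matrices and a column vector over the (polynomially sized) state space of $\mathcal{A}_P \otimes M$, each factor depending only on its endpoint positions (the symbols there being pinned to $w$). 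Specializing $M$ to the constant-$1$ automaton gives the same factorization for the denominator $D(S) := P(w'_S = w_S)$ but with \emph{scalar} factors, namely entries of products of the stochastic matrices $P_j$ — this is just the standard fact that the marginal of a Markov chain on a set of times telescopes into a product of (skip-)transition probabilities. Here the extra hypothesis that $P_{init}$ and the $P_j$ have strictly positive entries is essential: it makes every scalar factor of $D(S)$ strictly positive, so we may divide it into the matching matrix factor of $N(S)$ and obtain $h(S) = N(S)/D(S) = \tilde\mu(s_1)\,\tilde R(s_1,s_2)\cdots\tilde R(s_{k-1},s_k)\,\hat\nu(s_k)$, with $\tilde\mu, \tilde R$ again polynomially sized and polynomial-time computable (there is no trailing division, consistent with $D(S)$ not depending on positions past $s_k$).

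Once $h(S)$ is a clean path product over the increasing sequence $s_1 < \dots < s_k$, the aggregates are computed by a dynamic program over (position, coalition-size): letting $W_{p,c} := \sum_{s_1 < \dots < s_c = p} \tilde\mu(s_1)\tilde R(s_1,s_2)\cdots\tilde R(s_{c-1},p)$, one has $W_{p,1} = \tilde\mu(p)$, $W_{p,c} = \sum_{p' < p} W_{p',c-1}\,\tilde R(p',p)$, and $\sum_{|S|=c} h(S) = \sum_p W_{p,c}\,\hat\nu(p)$; the variants with $i$ forced into or out of the coalition follow by restricting which positions may serve as a checkpoint, and the $c=0$ term $h(\emptyset) = \mathbb{E}_{w'\sim P^{(n)}}[f_M(w')]$ is handled directly. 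Precomputing all the $\tilde R(a,b)$ incrementally ($O(n^2)$ matrix products) and running the $O(n^2)$-cell recurrence stays within $\texttt{poly}(n,|\Sigma|,\texttt{size}(M))$; substituting $T_k,T'_k$ into the reweighted sum above concludes. The main obstacle — the place where the Markovian assumption together with strict positivity is indispensable — is precisely the normalizer $D(S)=P(w'_S = w_S)$: SHAP's value function is an \emph{observational} conditional expectation, so $h(S)$ is a ratio, and a direct attack would have to sum exponentially many ratios with distinct denominators (exactly what makes the problem \#P-hard for empirical or na\"ive-Bayes distributions); the Markov property is what forces every such denominator to factor, in lockstep with the numerator, into positive scalars that can be absorbed locally into the matrix factors.
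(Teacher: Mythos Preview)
Your proof is correct and pivots on the same insight as the paper's: under the Markovian hypothesis with strictly positive transitions, the normalizer $D(S) = P(w'_S = w_S)$ telescopes into a product of skip--transition probabilities along the consecutive elements of $S$, so the ratio $h(S) = N(S)/D(S)$ inherits a path--product structure that a $(\text{position},\text{coalition size})$ dynamic program can aggregate in polynomial time. The organization, however, is different. The paper stays inside the WA/WT formalism throughout: it builds (i) a WA $A_{w,k}$ for the uniform pattern law $\mathcal{P}_k^w$, whose state set $\{0,\dots,|w|\}\times\{0,\dots,k\}$ is precisely your DP grid; (ii) a WT $T_{w,P}$ for the conditional density $(w',p)\mapsto P(w'\mid w'\in L_p)$, the nontrivial piece being a DWA for $p\mapsto 1/P(L_p)$ whose states record $(\text{current position},\texttt{pos}(p),\texttt{sym}(p))$ and whose transition weights are exactly the reciprocals of your scalar factors of $D(S)$; and then recovers $\text{SHAP}_1,\text{SHAP}_2$ as partition constants of product/projection automata (Lemmas~\ref{ffdist:mainlemma} and~\ref{complex}). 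Your route unfolds the same computation into explicit matrix factors $\tilde\mu,\tilde R,\hat\nu$ over the product state space $\Sigma\times[\texttt{size}(M)]$ and runs the recurrence $W_{p,c}=\sum_{p'<p}W_{p',c-1}\tilde R(p',p)$ by hand, never invoking transducers. What the paper's packaging buys is modularity --- the machine computing $1/P(L_p)$ is decoupled from $M$ and from the coalition enumeration, and the closure lemmas (Lemma~\ref{operators}) do the bookkeeping; what yours buys is a lighter toolbox and a more transparent account of where strict positivity enters (the per--gap division $\hat R(a,b)\mapsto\tilde R(a,b)$), at the cost of having the position--dependent product construction and the two variants ``$i$ forced in/out'' spelled out ad hoc rather than absorbed into a single automaton.
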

In essence, Theorem \ref{ffdist:mainthm} states the existence of an algorithm that computes exactly the SHAP score for the class of WAs under Markovian distributions in $O(\texttt{poly}(\texttt{size}(A), |\Sigma|, |w|))$ time where $A$ is the WA given in input instance.

The remainder of this section is dedicated to provide the high-level steps of the proof of theorem \ref{ffdist:mainthm}. Technically engaged proofs of intermediary results are delegated to the appendix. At a high-level, the structure of the proof follows two steps, where the second step is decomposed in two sub-steps:

\begin{enumerate}
    \item \textit{A decomposition of the problem $\texttt{SHAP}(\texttt{WA}, \texttt{MARKOV})$:} \\
    The first step involves a decomposition of the SHAP score formula into a sum of functions, denoted $\text{SHAP}_{1},~\text{SHAP}_{2}$, which will be defined later in this section. By means of a reduction argument, we shall prove that if the computational problems associated to  $\text{SHAP}_{1},~\text{SHAP}_{2}$ are in FP, then $\texttt{SHAP}(\texttt{WA}, \texttt{MARKOV})$ is also in \emph{FP} (lemma \ref{reduction-shap}). 
    \item \textit{The problems $\text{SHAP}_{1},~\text{SHAP}_{2}$ are in \emph{FP}:} \\
    In the second step, we shall show that the computational problems associated to $\text{SHAP}_{1},~\text{SHAP}_{2}$ are in FP. (lemma \ref{shap1shap2FP}). The proof of this statement will follow two sub-steps:
    \begin{enumerate}
        \item In the first sub-step, we shall prove that the computation of $\text{SHAP}_{1}$ and $\text{SHAP}_{2}$ is reduced to performing a finite sequence of operations over languages/seq2seq languages whose parametrization will depend on the input instance of the problem (lemma \ref{ffdist:mainlemma}).
      \item In the second sub-step, we show that WAs/WTs that compute languages/seq2seq languages over which operations are performed in the previous step can be constructed.
    \end{enumerate} 
\end{enumerate}

The proof is essentially constructive, and can be translated to a practical implementation. The organisation of the remainder of this section will follow the structure of the proof given above.

\subsection{Step 1: A decomposition of the problem $\texttt{SHAP}(\texttt{WA}, \texttt{MARKOV})$.}

For a model $M$ computing a language over $\Sigma^{*}$, a sequence $w \in \Sigma^{*}$, an integer $(i,k) \in [|w|] \times [|w| - 1]$, and $P$ a probability distribution over $\Sigma^{\infty}$. 
 Define the following two functions:
\begin{equation}\label{shap1}
\begin{split}
    \text{SHAP}_{1}(M,w,k,P) \myeq  \mathbb{E}_{p \sim \mathcal{P}_{k}^{w}} V(p; M,w,P)
\end{split}
\end{equation}
and,
\begin{equation}\label{shap2}
    \text{SHAP}_{2}(M,w,i,k,P)  \myeq \mathbb{E}_{p \sim \mathcal{P}_{k}^{w}} V(\texttt{swap}(p,i); M,w,P) 
\end{equation}

By a simple manipulation of the SHAP score formula in \eqref{shapformulaseq}, we obtain  
\begin{equation}\label{ffdist:decomposition}
\begin{split}
   \text{SHAP}(M,w,i,P) =  \sum\limits_{k=1}^{|w| - 1} \frac{1}{k} & [\text{SHAP}_{1}(M,w,k,P) \\ 
   & - \text{SHAP}_{2}(M,w,i,k,P)] 
\end{split}
\end{equation}

The formal definition of the computational problems associated with the computation of $\texttt{SHAP}_{1},~\texttt{SHAP}_{2}$ for the class of WAs under the family of Markovian distributions is given as follows: 

$\bullet$ \textbf{Problem:} $\texttt{SHAP}_{1}(\texttt{WA}, \texttt{MARKOV})$ \\
$~~$\textbf{Instance:} A WA $A$, a sequence $w$ in $\Sigma^{*}$, an integer $k \in [|w|]$, $P \in \texttt{MARKOV}$ \\
$~~$\textbf{Output:} Compute $\text{SHAP}_{1}(A,w,k,P)$ 

$\bullet$ \textbf{Problem:} $\texttt{SHAP}_{2}(\texttt{WA}, \texttt{MARKOV})$ \\
$~~$\textbf{Instance:} A WA $A$, a sequence $w$ in $\Sigma^{*}$, two integers $(k,i) \in [|w|]^{2}$, $P \in \texttt{MARKOV}$ \\
$~~$\textbf{Output:} Compute $\text{SHAP}_{2}(A,w,i,k,P)$ 

The polynomial-time reduction of the problem $\texttt{SHAP}(\texttt{WA}, \texttt{MARKOV})$ to $\texttt{SHAP}_{1}(\texttt{WA}, \texttt{MARKOV})$ and $\texttt{SHAP}_{2}(\texttt{WA}, \texttt{MARKOV})$ is straightforward in light of equation \eqref{ffdist:decomposition}. The following lemma formally states this fact: 
\begin{lemma} \label{reduction-shap}
   If $\texttt{SHAP}_{1}(\texttt{WA}, \texttt{MARKOV})$ and $\texttt{SHAP}_{2}(\texttt{WA}, \texttt{MARKOV})$ are in \emph{FP}, then $\texttt{SHAP}(\texttt{WA}, \texttt{MARKOV})$  is in \emph{FP}.
\end{lemma}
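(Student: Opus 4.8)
The plan is to observe that Lemma~\ref{reduction-shap} is an immediate consequence of the decomposition identity~\eqref{ffdist:decomposition} together with the closure of \emph{FP} under the arithmetic operations appearing in that identity. Concretely, suppose we are given oracles (in fact, polynomial-time algorithms) for $\texttt{SHAP}_1(\texttt{WA},\texttt{MARKOV})$ and $\texttt{SHAP}_2(\texttt{WA},\texttt{MARKOV})$. Given an instance $(A,w,i,P)$ of $\texttt{SHAP}(\texttt{WA},\texttt{MARKOV})$, the algorithm iterates $k$ from $1$ to $|w|-1$; for each $k$ it calls the first algorithm on $(A,w,k,P)$ to obtain $\text{SHAP}_1(A,w,k,P)$ and the second on $(A,w,i,k,P)$ to obtain $\text{SHAP}_2(A,w,i,k,P)$, forms the difference, divides by $k$, and accumulates the running sum. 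By~\eqref{ffdist:decomposition} the final accumulated value is exactly $\text{SHAP}(A,w,i,P)$, so the algorithm is correct.

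Next I would argue the running time is polynomial in the instance size. There are $|w|-1$ iterations, hence at most a linear number of calls to each subroutine. Every call is made on an instance whose size is bounded by the size of the original instance plus $O(\log|w|)$ bits for the extra integer $k$ (and $i$), so each call costs $O(\texttt{poly}(\texttt{size}(A),|\Sigma|,|w|))$ by the assumed membership in \emph{FP}. The remaining work per iteration --- one subtraction, one division by the integer $k$, one addition into the accumulator --- is a constant number of rational arithmetic operations on numbers whose bit-lengths are polynomially bounded (they are themselves outputs of \emph{FP} algorithms, hence of polynomial bit-size, and division by $k\le|w|$ cannot blow this up beyond a polynomial factor). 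Summing over the $O(|w|)$ iterations gives a total running time of $O(\texttt{poly}(\texttt{size}(A),|\Sigma|,|w|))$, so $\texttt{SHAP}(\texttt{WA},\texttt{MARKOV})\in\emph{FP}$.

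There is essentially no hard step here: the lemma is a routine composition-of-\emph{FP}-algorithms argument, and the only thing worth being careful about is the bit-complexity bookkeeping, i.e.\ verifying that forming the linear combination $\sum_{k=1}^{|w|-1}\frac1k(\text{SHAP}_1-\text{SHAP}_2)$ keeps all intermediate numerators and denominators of polynomial size --- which it does, since we are adding a polynomial number of polynomially-sized rationals. All the genuine difficulty of Theorem~\ref{ffdist:mainthm} is deferred to establishing the hypothesis of the lemma, namely Lemma~\ref{shap1shap2FP}, which is handled in the subsequent subsections via the operator calculus of Lemma~\ref{operators}.
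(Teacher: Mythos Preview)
Your proposal is correct and follows essentially the same approach as the paper: invoke the decomposition identity~\eqref{ffdist:decomposition}, call the assumed polynomial-time subroutines for $\texttt{SHAP}_1$ and $\texttt{SHAP}_2$ on the $O(|w|)$ relevant instances, and combine the results arithmetically. Your version is slightly more careful than the paper's in explicitly addressing the bit-complexity of the intermediate rationals, but otherwise the arguments are identical.
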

\begin{proof}
    The proof is straightforwardly obtained from equation \eqref{ffdist:decomposition}. Assume $\texttt{SHAP}_{1}(\texttt{WA}, \texttt{MARKOV})$ and $\texttt{SHAP}_{2}(\texttt{WA}, \texttt{MARKOV})$ are in FP. Then, there exists two algorithms, say $\mathcal{A}_{1}$ and $\mathcal{A}_{2}$, that solve the problems $\texttt{SHAP}_{1}$ and $\texttt{SHAP}_{2}$ respectively in $O(\texttt{poly}(\texttt{size}(A), |w|,|\Sigma))$ time. 

    Fix an input of instance $<M,w,i,P>$ of $\texttt{SHAP}(\texttt{WA}, \texttt{MARKOV})$. To compute $\texttt{SHAP}(M,w,i,P)$ using $\mathcal{A}_{1},~\mathcal{A}_{2}$ as oracles, run the following schema: 
    \begin{enumerate}
        \item Call $\mathcal{A}_{1}$ on the set of input instances $\{<M,w,k,p>\}_{k \in [|w|]}$ yielding $\{y_{k}\}_{k \in [|w|]}$
        \item Call $\mathcal{A}_{2}$ on the set of input instances $\{<M,w,i,k,P>\}_{k \in [|w|]}$ yielding $\{y'_{k}\}_{k \in [|w|]}$
        \item Output: 
        $\sum\limits_{k=1}^{|w|-1} \frac{1}{k} (y_{k} - y'_{k})$
    \end{enumerate}
    The correctness of this schema to solve $\texttt{SHAP}(\texttt{WA}, \texttt{MARKOV})$ is guaranteed by equation \eqref{ffdist:decomposition}. In addition, by assumptions on $\mathcal{A}_{1},~ \mathcal{A}_{2}$, this schema runs also in $O(\texttt{poly}(\texttt{size}(A), |\Sigma|, |w|)$ time. 
\end{proof}

\subsection{Step 2: $\texttt{SHAP}_{1}(\texttt{WA}, \texttt{MARKOV})$ and $\texttt{SHAP}_{2}(\texttt{WA}, \texttt{MARKOV})$ are in \emph{FP}.}
This segment is dedicated to provide the outline of the proof of the following lemma:

\begin{lemma} \label{shap1shap2FP}
    The problems $\texttt{SHAP}_{1}(\texttt{WA}, \texttt{MARKOV})$ and $\texttt{SHAP}_{2}(\texttt{WA}, \texttt{MARKOV})$ are in \emph{FP}.
\end{lemma}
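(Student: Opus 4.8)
The plan is to handle $\texttt{SHAP}_{1}(\texttt{WA},\texttt{MARKOV})$ and $\texttt{SHAP}_{2}(\texttt{WA},\texttt{MARKOV})$ uniformly, by rewriting each output value as a fixed composition of the three operators of Lemma~\ref{operators} applied to a WA and a WT that can be constructed in time $\texttt{poly}(\texttt{size}(A),|w|,|\Sigma|)$; Lemma~\ref{operators} then yields membership in \emph{FP}. Fix an instance $\langle A,w,k,P\rangle$ and write $n=|w|$. Since $\mathcal{P}_{k}^{w}$ is uniform on $\mathcal{L}_{k}^{w}$, which has $\binom{n}{k}$ elements, we have $\text{SHAP}_{1}(A,w,k,P)=\tfrac{1}{\binom{n}{k}}\sum_{p\in\mathcal{L}_{k}^{w}}V(p;A,w,P)$ with $V(p;A,w,P)=\mathbb{E}_{w'\sim P^{(n)}}[f_{A}(w')\mid w'\in L_{p}]$, and $\text{SHAP}_{2}(A,w,i,k,P)$ is the same sum with each $p$ replaced by $\texttt{swap}(p,i)$. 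The structural fact driving everything is that, under the positivity assumption on $P$, the conditional law $\mu_{p}:=P^{(n)}(\cdot\mid L_{p})$ is again a (time-inhomogeneous) Markov law: it is deterministic on the non-$\#$ coordinates of $p$, and on each maximal run of consecutive $\#$'s it is the Markov bridge pinned at the two neighbouring fixed symbols (or at $P_{init}$ when the run touches the left end), whose normalising constant is a single entry of a matrix product $P_{a-1}P_{a}\cdots P_{b}$ (resp.\ of $P_{init}^{T}P_{1}\cdots P_{b}$) --- an entry that is strictly positive, hence invertible, and computable in polynomial time. Consequently $f_{\mu_{p}}$ is computed by a WA of size $O(n|\Sigma|)$, and $V(p;A,w,P)=|f_{A}\otimes f_{\mu_{p}}|_{n}$.

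Given this, I would push the pattern sum through the product and partition-constant operators by linearity, obtaining
\[
 \text{SHAP}_{1}(A,w,k,P)=|f_{A}\otimes\nu|_{n},\qquad \nu:=\tfrac{1}{\binom{n}{k}}\sum_{p\in\mathcal{L}_{k}^{w}}f_{\mu_{p}},
\]
and, with $\nu':=\tfrac{1}{\binom{n}{k}}\sum_{p\in\mathcal{L}_{k}^{w}}f_{\mu_{\texttt{swap}(p,i)}}$, $\text{SHAP}_{2}(A,w,i,k,P)=|f_{A}\otimes\nu'|_{n}$. It then remains to realise $\nu,\nu'$ as polynomial-size WAs, which I would do via the projection operator: let $h$ be the WA over $\Sigma_{\#}^{*}$ with $h(p)=\tfrac{1}{\binom{n}{k}}I_{\mathcal{L}_{k}^{w}}(p)$ --- it only has to compare $p$ with the fixed symbols of $w$ and count $\#$'s, so its size is $O(n^{2})$ --- and let $g$, resp.\ $g'$, be the WT over $\Sigma_{\#}^{*}\times\Sigma^{*}$ with $g(p,w')=\mu_{p}(w')\,I_{L_{p}}(w')$, resp.\ $g'(p,w')=\mu_{\texttt{swap}(p,i)}(w')\,I_{L_{\texttt{swap}(p,i)}}(w')$. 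Then $\nu=\Pi(h,g)$ and $\nu'=\Pi(h,g')$, so $\text{SHAP}_{1}=|f_{A}\otimes\Pi(h,g)|_{n}$ and $\text{SHAP}_{2}=|f_{A}\otimes\Pi(h,g')|_{n}$; this chain of identities is what the forthcoming Lemma~\ref{ffdist:mainlemma} will record, and together with Lemma~\ref{operators} it reduces the whole problem to building $h$, $g$ and $g'$ in polynomial time.

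The real work --- and the step I expect to be the main obstacle --- is the construction of the transducer $g$ (the WT $g'$ is almost identical, treating position $i$ as a $\#$ regardless of $p_{i}$). Scanning $(p,w')$ left to right, $g$ must (i) vanish unless $w'$ agrees with $p$ off the $\#$'s and $p$ agrees there with $w$; (ii) inside each $\#$-block accumulate the bridge path weight $P_{j}(w'_{j+1}\mid w'_{j})$, with the transition into the block reading $P_{j}(w'_{j+1}\mid w_{j})$ and the symmetric one out of it; and (iii) upon leaving a $\#$-block multiply by the reciprocal of that block's normaliser. Since the $P_{j}$'s and the normalisers are position-dependent, the state must carry the current index; since a block's normaliser depends on where the block began, the state must additionally remember, while inside the block, the block's start index and its left-boundary symbol; and one more component records the previous $w'$-symbol. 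The state space is then $\texttt{poly}(n,|\Sigma|)$, and every transition weight --- an entry of some $P_{a-1}\cdots P_{j}$ or $P_{init}^{T}P_{1}\cdots P_{j}$, or a reciprocal thereof, all finite by positivity --- is computable in $\texttt{poly}(n,|\Sigma|)$ time because $P\in\texttt{MARKOV}$. The conceptual crux is exactly the first-paragraph observation: although $\text{SHAP}_{1}$ is a sum over the $\binom{n}{k}$ patterns of ratios with pattern-dependent denominators, each denominator factorises over the $\#$-blocks into invertible, efficiently computable matrix entries, so the aggregate collapses into a single polynomial-size weighted device rather than having to be evaluated pattern by pattern. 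A final bookkeeping of rationals shows the output has polynomial bit-size, and Lemma~\ref{operators} supplies the time bound.
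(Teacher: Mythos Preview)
Your proposal is correct and follows essentially the same approach as the paper: both reduce the computation to a composition of the product, projection and partition-constant operators applied to a polynomial-size WA encoding $\mathcal{P}_{k}^{w}$ and a polynomial-size WT encoding $P(w'\mid w'\in L_{p})$ (resp.\ $P(w'\mid w'\in L_{\texttt{swap}(p,i)})$), the key insight in each case being that the Markov assumption makes $P(L_{p})$ factor over the $\#$-blocks into invertible, efficiently computable matrix entries, so that the reciprocal can be absorbed into a transducer whose state records only the current position together with the position and value of the last non-$\#$ symbol. The only cosmetic difference is the order of projection: the paper projects out $w'$ first and then sums over patterns, writing $\text{SHAP}_{1}=|f_{w,k}\otimes\Pi(f_{A},g^{(1)}_{w,P})|_{|w|}$, whereas you project out patterns first and then sum over $w'$, writing $\text{SHAP}_{1}=|f_{A}\otimes\Pi(h,g)|_{n}$; the underlying machines coincide.
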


The result of the main theorem \ref{ffdist:mainthm} is an immediate corollary of lemma  \ref{reduction-shap} and lemma \ref{shap1shap2FP} presented in the previous segment of this section. 

The proof of lemma \ref{shap1shap2FP} will follow two steps. In the first step, the formulas of $\text{SHAP}_{1}$ and $\text{SHAP}_{2}$ will be reformulated in terms of operations over languages/seq2seq languages defined in section \ref{background}. The parametrization of these languages depends on the input instance of the problem. In the second step, we will show that WAs and WTs can be constructed in polynomial time that compute these languages/seq2seq languages. Combining the results of the two steps and the efficiency of implementing these operators for the case of WAs/WTs (lemma \ref{operators}), the proof of lemma \ref{shap1shap2FP} can be easily obtained. 

 \subsubsection{Step 2.a: Computation $\text{SHAP}_{1},~\text{SHAP}_{2}$ in terms of language operators.}
The following lemma provides a reformulation of the functions $\text{SHAP}_{1}$ and $\text{SHAP}_{2}$ in the form of operations over languages whose properties depend on the input instance of their respective problems:
\begin{lemma}\label{ffdist:mainlemma}
    Let $A$ be a WA over $\Sigma^{*}$,  a sequence $w \in \Sigma^{*}$, two integers $(i,k) \in [|w|]\times [|w|-1]$, and $P$ be an arbitrary probability distribution over $\Sigma^{\infty}$. We have 
    \begin{equation}\label{shap1wa}
        \text{SHAP}_{1}(A,w,k,P) = | f_{w,k} \otimes \Pi(f_{A}, g_{w,P}^{(1)}) |_{|w|}
    \end{equation}
    and,
    \begin{equation}\label{shap2wa}
         \text{SHAP}_{2}(A,w,i,k,P) = | f_{w,k} \otimes \Pi(f_{A},g^{(2)}_{w,i,P})|_{|w|}
    \end{equation}
    where 
    \vspace{-0.3cm}
    \begin{itemize}
        \item $f_{w,k} = \mathcal{P}_{k}^{w}$,
        \item $g^{(1)}_{w,P}$ is a seq2seq language over $\Sigma^{*} \times \Sigma_{\#}^{*}$ that satisfies the following constraint:
        \begin{small}
        \begin{equation} \label{constraint1}
            \forall (w',p) \in \Sigma^{|w|} \times \Sigma_{\#}^{|w|} :~~g_{w,P}^{(1)}(w',p) = P(w' | w' \in L_{p})
        \end{equation}
        \end{small}
        \item $g^{(2)}_{w,i,p}$ is a seq2seq language over $\Sigma^{*} \times \Sigma_{\#}^{*}$ that satisfies the following constraint:
        \begin{small}
         \begin{equation} \label{constraint2}
            \forall (w',p) \in \Sigma^{|w|} 
 \times \Sigma_{\#}^{|w|} :~~g_{w,i,P}^{(2)}(w',p) = P(w' | w' \in L_{p'})
        \end{equation}
        \end{small}
        where $p' = \texttt{swap}(p,i)$
    \end{itemize}
\end{lemma}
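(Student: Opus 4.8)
The plan is to unwind the definitions of $\text{SHAP}_{1}$ and $\text{SHAP}_{2}$ and match them term by term against the right-hand sides of \eqref{shap1wa} and \eqref{shap2wa}. Starting from \eqref{shap1}, we have $\text{SHAP}_{1}(A,w,k,P) = \sum_{p \in \mathcal{L}_{k}^{w}} \mathcal{P}_{k}^{w}(p) \cdot V(p;A,w,P)$, and by \eqref{value} together with the definition of conditional expectation, $V(p;A,w,P) = \sum_{w' \in \Sigma^{|w|}} P^{|w|}(w' \mid w' \in L_{p}) \cdot f_{A}(w')$. The first key step is to observe that the conditional probability $P(w' \mid w' \in L_{p})$ is exactly what the seq2seq language $g_{w,P}^{(1)}$ computes by the defining constraint \eqref{constraint1}, so the inner sum is $\sum_{w' \in \Sigma^{|w|}} f_{A}(w') \cdot g_{w,P}^{(1)}(w',p)$, which is precisely $\Pi(f_{A}, g_{w,P}^{(1)})(p)$ by the definition of the projection operator (note the length bookkeeping: $|p| = |w|$, so the projection sums over $w' \in \Sigma^{|p|} = \Sigma^{|w|}$ as required).

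The second key step is to fold the outer expectation back in. Since $f_{w,k} = \mathcal{P}_{k}^{w}$ is supported on $\mathcal{L}_{k}^{w} \subseteq \Sigma_{\#}^{|w|}$, we get $\text{SHAP}_{1}(A,w,k,P) = \sum_{p \in \Sigma_{\#}^{|w|}} f_{w,k}(p) \cdot \Pi(f_{A}, g_{w,P}^{(1)})(p)$. By the definition of the product (Hadamard) operator this is $\sum_{p \in \Sigma_{\#}^{|w|}} \big(f_{w,k} \otimes \Pi(f_{A}, g_{w,P}^{(1)})\big)(p)$, and since the summation ranges over all strings of length $|w|$ over the alphabet $\Sigma_{\#}$, this is by definition the partition constant $|f_{w,k} \otimes \Pi(f_{A}, g_{w,P}^{(1)})|_{|w|}$, establishing \eqref{shap1wa}. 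One subtlety worth spelling out: the projection $\Pi(f_{A}, g_{w,P}^{(1)})$ is a priori a language over $\Sigma_{\#}^{*}$, but the product with $f_{w,k}$ kills every pattern $p$ for which $p \notin \mathcal{L}_{k}^{w}$, so the terms that survive are exactly those in the original expectation $\mathbb{E}_{p \sim \mathcal{P}_{k}^{w}}$ — this is where the constraint $|p|_{\#} = k$ and $w \in L_{p}$ is enforced, purely through the support of $f_{w,k}$.

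The argument for \eqref{shap2wa} is identical in structure, the only change being that inside $V(\texttt{swap}(p,i); A,w,P)$ the conditioning event is $w' \in L_{p'}$ with $p' = \texttt{swap}(p,i)$ rather than $w' \in L_{p}$; this is absorbed entirely into the different defining constraint \eqref{constraint2} for $g_{w,i,P}^{(2)}$, so $\Pi(f_{A}, g_{w,i,P}^{(2)})(p) = \sum_{w'} f_{A}(w') \cdot P(w' \mid w' \in L_{\texttt{swap}(p,i)}) = V(\texttt{swap}(p,i);A,w,P)$, and the same product-then-partition-constant manipulation finishes the proof. Note that the outer weighting is still $\mathcal{P}_{k}^{w}(p)$ over the original (unswapped) pattern $p$, matching the definition of $f_{w,k}$, so no change is needed there.

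I do not expect a genuine obstacle here — the lemma is essentially a bookkeeping identity, repackaging the SHAP$_1$/SHAP$_2$ sums into the operator language so that Lemma \ref{operators} can later be invoked. The only points requiring care are (i) checking that the length indices line up so that the projection operator is applied to strings of matching length and the partition constant is taken at support $|w|$, and (ii) making explicit that $g_{w,P}^{(1)}$ and $g_{w,i,P}^{(2)}$ are merely required to satisfy the stated constraints on $\Sigma^{|w|} \times \Sigma_{\#}^{|w|}$ — their values elsewhere are irrelevant because of the length-matching built into $\Pi$ and $|\cdot|_{|w|}$. The nontrivial content, namely that such WAs/WTs actually exist and are polynomial-time constructible, is deferred to Step 2.b and is not part of this lemma.
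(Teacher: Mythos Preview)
Your proposal is correct and follows essentially the same approach as the paper's own proof: both unfold the definition of $\text{SHAP}_{1}$ as a double sum over $p \in \Sigma_{\#}^{|w|}$ and $w' \in \Sigma^{|w|}$, identify the inner sum with $\Pi(f_{A}, g_{w,P}^{(1)})(p)$ via constraint \eqref{constraint1}, and then recognize the outer sum as the partition constant of the Hadamard product with $f_{w,k}$. Your added remarks on length bookkeeping and on the support of $f_{w,k}$ enforcing the coalition constraints are helpful elaborations that the paper leaves implicit.
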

The proof is given in appendix \ref{mainlemma}.

 Expressions \eqref{shap1wa} and \eqref{shap2wa} reduce the problem of computing $\text{SHAP}_{1}$ and $\text{SHAP}_{2}$ to that of performing  operations over a language $f_{w,k}$ and two seq2seq languages, $g_{w,P}^{(1)},~g_{w,i,P}^{(2)}$ whose properties are given by equations \eqref{constraint1} and \eqref{constraint2} , respectively. The missing link to complete the proof of lemma \ref{shap1shap2FP} is to prove that a WA that implements the language  $f_{w,k}$, and WTs that compute seq2seq languages $g_{w,P}^{(1)},~g_{w,i,P}^{(2)}$ whose properties are given in lemma \ref{ffdist:mainlemma} can be constructed in polynomial time.
\subsubsection{Step 2.b: Construction of WAs/WTs that compute $f_{w,k},~g_{w,P}^{(1)},~g_{w,i,P}^{(2)}$ }
The key insight of the article is the following:

$~~$If $P \in \texttt{MARKOV}$, two seq2seq languages $g_{w,P}^{(1)}$ and $g_{w,i,P}^{(2)}$ that satisfy the constraints \eqref{constraint1} and \eqref{constraint2}, respectively, admit a representation using the WA/WT formalism. In addition, the construction of  WAs and WTs that compute these languages/seq2seq languages can be performed in time polynomial in the size of the input instance.

The next lemma provides a formal statement of this fact while also covering the language $f_{w,k}$.

\begin{lemma} \label{complex} 
    \begin{enumerate}
        \item \textbf{The language $f_{w,k}$:} 
        There exists an algorithm $\mathcal{A}_{1}$ that takes as input, a sequence $w \in \Sigma^{*}$, an integer $k \in [|w| - 1]$, runs in $O(\texttt{poly}(|w|))$, and outputs a WA $A_{k,w}$ over $\Sigma_{\#}^{*}$ that computes  the language $f_{w,k} = \mathcal{P}_{k}^{w}$. 
        \item \textbf{The seq2seq language $g_{w,P}^{(1)}$:} There exists an algorithm $\mathcal{A}_{2}$ that takes a sequence $w \in \Sigma^{*}$, and $P \in \texttt{MARKOV}$, runs in $O(\texttt{poly}(|w|, |\Sigma|))$, and outputs a WT $T_{w,P}$ that computes a  seq2seq language that satisfies the constraint \eqref{constraint1}.
        \item \textbf{The seq2seq language $g_{w,i,P}^{(2)}$:} There exists an algorithm $\mathcal{A}_{3}$ that takes as input a sequence $w \in \Sigma^{*}$, an integer $i \in  [|w|]$, and $P \in \texttt{MARKOV}$, runs in $O(\texttt{poly}(|w|, |\Sigma|))$, and outputs a WT that implements a seq2seq language over  that satisfies the constraint \eqref{constraint2}
        \end{enumerate}
        
\end{lemma}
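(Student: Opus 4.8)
The plan is to prove the three items by exhibiting explicit polynomial-size machines, reducing the third item to the second.

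\emph{First item (the language $f_{w,k}$).} I would realise $\mathcal{P}_k^w$ by a ``layered'' WA $A_{k,w}$ over $\Sigma_{\#}$ whose states are pairs $(j,c)$ with $j\in\{0,\dots,|w|\}$ counting the symbols already read and $c\in\{0,\dots,k\}$ counting the occurrences of $\#$ read so far. From a state $(j,c)$, reading $w_{j+1}$ moves to $(j+1,c)$ with weight $1$, reading $\#$ moves to $(j+1,c+1)$ with weight $1$, and any other symbol contributes weight $0$; the initial vector puts weight $1$ on $(0,0)$ and the final vector puts $1/\binom{|w|}{k}$ on $(|w|,k)$ and $0$ elsewhere. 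A surviving run exists exactly when $p\in\Sigma_{\#}^{|w|}$ is compatible with $w$ and has exactly $k$ wildcards, i.e.\ $p\in\mathcal{L}_k^w$, in which case it carries weight $1/\binom{|w|}{k}=\mathcal{P}_k^w(p)$ since $|\mathcal{L}_k^w|=\binom{|w|}{k}$. The WA has $O(|w|^2)$ states and $\binom{|w|}{k}$ has polynomially many bits, so $\mathcal{A}_1$ runs in $O(\texttt{poly}(|w|))$.

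\emph{Second item (the seq2seq language $g^{(1)}_{w,P}$).} On a length-$|w|$ pair $(w',p)$ the target value is the conditional $P^{(|w|)}(w'\mid w'\in L_p)$, i.e.\ the law of the chain $P^{(|w|)}$ after clamping the non-wildcard coordinates of $p$ to the corresponding symbols. My first step is the standard fact that this clamped chain is again a (time-inhomogeneous) Markov chain: with backward messages $\beta_{|w|}\equiv\mathbf{1}$ and $\beta_j[\sigma]=\sum_{\sigma'}P_j(\sigma'\mid\sigma)\,\mathbb{1}[p_{j+1}\in\{\#,\sigma'\}]\,\beta_{j+1}[\sigma']$, one has $P^{(|w|)}(w'\mid w'\in L_p)=\tilde P_{init}(w'_1)\prod_{j=1}^{|w|-1}\tilde P_j(w'_{j+1}\mid w'_j)$, where $\tilde P_j(\sigma'\mid\sigma)=P_j(\sigma'\mid\sigma)\,\mathbb{1}[p_{j+1}\in\{\#,\sigma'\}]\,\beta_{j+1}[\sigma']/\beta_j[\sigma]$ and $\tilde P_{init}(\sigma)=P_{init}(\sigma)\,\mathbb{1}[p_1\in\{\#,\sigma\}]\,\beta_1[\sigma]/P^{(|w|)}(L_p)$. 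My second step is to encode this by a WT $T_{w,P}$ whose states are pairs (position $j$, last emitted symbol $\sigma$), together with a source: on reading $(\tau,c)\in\Sigma\times\Sigma_{\#}$ at layer $j$ the machine moves $(j,\sigma)\mapsto(j+1,\tau)$ carrying the appropriate reweighted transition when $c\in\{\#,\tau\}$, and dies otherwise. Since $P\in\texttt{MARKOV}$, each $P_j$ --- and hence each $\beta_j$ and each $\tilde P_j$ --- is computable in $O(\texttt{poly}(|w|,|\Sigma|))$ time, and the WT has $O(|w|\,|\Sigma|)$ states, so $\mathcal{A}_2$ runs in polynomial time; what remains is to check that the layer matrices can be arranged so that the length-$|w|$ matrix product reproduces $P^{(|w|)}(w'\mid w'\in L_p)$ exactly, i.e.\ so that constraint~\eqref{constraint1} holds.

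\emph{Third item (the seq2seq language $g^{(2)}_{w,i,P}$).} Because $g^{(2)}_{w,i,P}(w',p)=P^{(|w|)}(w'\mid w'\in L_{\texttt{swap}(p,i)})$ and $\texttt{swap}(p,i)$ merely overwrites coordinate $i$ of $p$ by $\#$, it suffices to rerun the construction of the second item on the WT that, at layer $i$ only, discards its pattern input and behaves as though it had read $\#$ there (so $w'_i$ is never forced), and is unchanged at every other layer. This gives a WT of the same $O(|w|\,|\Sigma|)$ size, built in $O(\texttt{poly}(|w|,|\Sigma|))$ time, whose seq2seq language satisfies constraint~\eqref{constraint2}.

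\emph{Main obstacle.} The first and third items are essentially bookkeeping once the second is in hand, so the weight of the lemma sits on the second item, and within it on the \emph{normalisation}. A left-to-right WT at layer $j$ has not yet seen the suffix $p_{j+1:|w|}$, whereas the reweighted kernels $\tilde P_j$ and the initial law $\tilde P_{init}$ depend on that suffix through the backward messages $\beta_j$ and through the global factor $P^{(|w|)}(L_p)^{-1}$; and while $p\mapsto P^{(|w|)}(L_p)$ is itself the language of a polynomial-size WA over $\Sigma_{\#}$, its reciprocal is not in general, so $P^{(|w|)}(L_p)^{-1}$ cannot simply be ``multiplied in'' after the fact. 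I expect the delicate part of the argument to be exactly the choice of layer matrices that makes all the $\beta$-normalisations telescope across the full length-$|w|$ product while the remaining factor $P^{(|w|)}(L_p)^{-1}$ is produced intrinsically by the WT --- and this is precisely where the Markov assumption is indispensable, the analogous object for latent-variable models admitting provably no such compact representation, which is where the known hardness results bite.
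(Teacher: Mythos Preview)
Your treatment of the first item is essentially identical to the paper's (a layered DFA on states $(j,c)$, then normalise by $\binom{|w|}{k}^{-1}$), and your reduction of the third item to the second by overwriting layer $i$ with a forced $\#$ is exactly what the paper does. The gap is in the second item.

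Your proposed WT with states $(j,\sigma)$ and transitions weighted by $\tilde P_j(\tau\mid\sigma)$ cannot be built as described: $\tilde P_j$ depends on $\beta_j,\beta_{j+1}$, which in turn depend on the \emph{suffix} $p_{j+1:|w|}$ of the pattern stream that the left-to-right machine has not yet read. You see this and flag it as the obstacle, but your diagnosis in the last paragraph --- that $p\mapsto P^{(|w|)}(L_p)^{-1}$ is not a polynomial-size WA language and therefore ``cannot simply be multiplied in after the fact'' --- is precisely the point where the paper goes the other way. Under the Markov assumption one has the forward factorisation
\[
P^{(|w|)}(L_p)\;=\;\prod_{j=0}^{|w|-1} P\bigl(w'_{j+1}\in L_{p_{j+1}}\ \big|\ w'_{1:j}\in L_{p_{1:j}}\bigr),
\]
and the Markov property collapses each conditional to a function of $\bigl(j,\ \texttt{pos}(p_{1:j}),\ \texttt{sym}(p_{1:j}),\ p_{j+1}\bigr)$ only, where $\texttt{pos}$ and $\texttt{sym}$ record the position and value of the last non-$\#$ symbol in the prefix of $p$. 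That triple lives in a set of size $O(|w|^2\,|\Sigma|)$, and each factor (the paper's $\mathbf{G}$) is a gap-transition probability computable in $O(\texttt{poly}(|w|,|\Sigma|))$ by a short dynamic program. Hence a \emph{deterministic} WA over $\Sigma_\#^*$ with state space $\{0,\dots,|w|\}^2\times(\Sigma\cup\{\langle\text{BOS}\rangle\})$ can read $p$ left to right and multiply in the \emph{reciprocals} $1/\mathbf{G}(\cdots)$ at each step, yielding exactly $1/P^{(|w|)}(L_p)$. The final WT is then assembled as (a WA for $P^{(|w|)}(w')$) $\times$ (a trivial DFT for $I_{L_p}(w')$) combined with this reciprocal WA via the multiplicative operator. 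In short: do not try to make the $\beta$'s telescope inside a single machine; instead realise that the normalising constant itself is a product over a polynomial state space and can be computed --- and inverted termwise --- by its own left-to-right automaton on $p$.
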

  In the sequel, we shall refer to algorithms that compute $f_{w,k},~g_{w,P}^{(1)}$ and $g_{w,i,P}^{(2)}$ by $\mathcal{A}_{1},~\mathcal{A}_{2}$ and $\mathcal{A}_{3}$, respectively. 

 The proof of lemma \ref{complex} is constructive, and can be found in  appendix \ref{proofconstruction}. 
 
 The construction of $\mathcal{A}_{1}$ is relatively easy. As for $\mathcal{A}_{2}$ and $\mathcal{A}_{3}$, the key observation stems from the Bayes' formula:
  \begin{equation} \label{bayesfor}
  P(w |w \in L_{p}) = \frac{P(w) \cdot I_{L_{p}}(w)}{P(L_{p})}
  \end{equation}

 In light of the equation \eqref{bayesfor}, the construction of $\mathcal{A}_{2}$ and $\mathcal{A}_{3}$ will follow the same spirit of the main algorithm for solving $\texttt{SHAP}(\texttt{WA}, \texttt{MARKOV})$. In other words, it will involve the construction of a WT over $\Sigma^{*} \times \Sigma_{\#}^{*}$ that implements the language $I_{L_{p}}(w)$, and two  WAs over $\Sigma^{*}$ and $\Sigma_{\#}^{*}$ that implement the languages $P(w)$ and $\frac{1}{P(L_{p})}$, respectively. Since WAs/WTs are not closed under the division operation, the major difficulty in the construction lies in the design of a WA that implements the language $\frac{1}{P(L_{p})}$ involving a division operation. 

We note that since $\mathcal{A}_{1},~\mathcal{A}_{2}$ and $\mathcal{A}_{3}$ run in time polynomial in their respective input instances implies that the size of their output machines is also polynomial in the size of their input instance\footnote{FP $\subset$ FPSPACE}. This fact will appear explicitly in the constructive proof of lemma \ref{complex}.

   In light of lemma \ref{ffdist:mainlemma} and \ref{complex}, we are ready to prove the main lemma of this subsection: 

   \begin{proof}(lemma \ref{shap1shap2FP}) 
    We shall prove that $\texttt{SHAP}_{1}$ is in FP. A similar argument can be applied to derive the same result for $\text{SHAP}_{2}$.
    
      Define the following algorithmic schema that takes as input an instance $<A,w,k,P>$ where $A$ is a WA, $w \in \Sigma^{*}$, $i \in [|w|]$ and $P \in \texttt{MARKOV}$:
      \begin{enumerate}
          \item $A_{w,k} \leftarrow \mathcal{A}_{1} (w,k)$
          \item $T_{w,P} \leftarrow \mathcal{A}_{2}(w,P) $
          \item \textbf{Output:} 
          $|f_{A_{w,k}} \otimes \Pi(f_{A} , f_{T_{w,P}})|_{|w|}$
      \end{enumerate}
      By lemma \ref{ffdist:mainlemma} (equation \eqref{shap1wa}), and the properties of $\mathcal{A}_{1},~\mathcal{A}_{2}$ (lemma \ref{complex}), this schema solves exactly the problem $\texttt{SHAP}_{1}(\texttt{WA}, \texttt{MARKOV})$. 

      In addition, this schema also runs in $O(\texttt{poly}(\texttt{size}(A),|w|,|\Sigma|))$. Indeed, by lemma \ref{complex}, steps 1 and 2 run in $O(\texttt{poly}(|w|)$ and $O(\texttt{poly}(|w|, |\Sigma|))$, respectively. Consequently, by $\text{\emph{FP}} \subset \text{\emph{FP}SPACE}$, the size of their outputs $A_{w,k}$, and $T_{w,k,P}$  is also polynomial in $|w|$ and $|\Sigma|$. 
      
      On the other hand, given that the operators $\otimes, |.|_{n},~~\Pi$ over languages represented by WAs/WTs can be computed in polynomial time with respective to the size of their input instances (lemma \ref{operators}), this proves that the third step of the schema also runs in $O(\texttt{poly}(\texttt{size}(A), |w|, |\Sigma|))$ time.
      
   \end{proof}

\section{$\texttt{SHAP}(\texttt{D-DNF}, \texttt{MARKOV})$ and $\texttt{SHAP}(\texttt{DT}, \texttt{MARKOV})$ are in FP} \label{results:reduction}
In this section, we switch our focus to boolean functions, in particular the class of disjoint-DNFs (\texttt{d-DNF}) \footnote{For the general case of arbitrary DNFs, it has been shown that computing the SHAP score for this class of models when features are assumed to be independent is intractable under widely believed complexity assumptions \cite{arenas23}.}. The choice of this family of models is mainly motivated by the fact that it encompasses the family of decision trees, a central class of \textit{glass-box} models capturing substantial attention within the explainable AI community. Recent works have been dedicated to exploring the computation of SHAP scores for Tree-based models across diverse configurations \cite{lundbergnature,yang2022fast,arenas23,yu2023linear}. Later in this section, we shall prove that computing the SHAP score for the family of decision trees under Markovian distributions is reducible in polynomial time to $\texttt{SHAP}(\texttt{WA},\texttt{MARKOV})$, offering a polynomial-time algorithmic construction to compute the original SHAP score for the family of decision trees under the Markovian assumption. 

The class of disjoint-DNFs (d-DNFs) is formally defined as follows:
\begin{definition}[Disjoint DNF]
A d-DNF is a logical expression $\Phi(X_{1}, X_{2}, \ldots,X_{n})$ where $\{X_1, X_2, \ldots, X_n \}$ is a set of input boolean variables, such that:
\begin{itemize}
    \item $\Phi$ is expressed as a disjunction (logical OR) of clauses, where each clause is expressed as one or more conjunctions (logical AND) of literals.
    \item Each clause in the expression is mutually exclusive from the others, ensuring that for any input combination \((X_1, X_2, \ldots, X_n)\), only one clause evaluates to true.
\end{itemize}
\end{definition} 

\textbf{Example.} Let $X = \{X_{1},X_{2},X_{3}, X_{4}\}$ be a set of binary variables. The formula 
\begin{equation}\label{disjointdnf}
 \Phi = (X_{1} \land X_{3} \land X_{4}) \lor (\bar{X}_{1} \land X_{2} \land X_{3}) \lor (X_{2} \land \bar{X}_{3})
 \end{equation} 
 is a d-DNF over the variables $\{X_{i}\}_{i \in [4]}$ comprising 3 clauses. Indeed, for any two distinct clauses $(C_{i},C_{j})$ for $(i,j) \in [3]^{2}$, the intersection of the set of satisfying  variable assignments for $C_{i}$ and $ C_{j}$ is empty. 
 
A Markovian distribution $P$ over a boolean random vector of dimension $N$ is given as: 

$$P(X_{1}, \ldots , X_{N}) = P_{init}(X_{1}) \prod\limits_{i=1}^{N - 1} P_{i}(X_{i+1}|X_{i})$$

To avoid confusion with the sequential case, the set of Markovian distributions over boolean vectors shall be denoted $\overrightarrow{\texttt{MARKOV}}$. For an integer $N>0$, $\overrightarrow{\texttt{MARKOV}}_{N}$ will refer to the set of Markovian distributions over boolean vectors of dimension $N$.
 
The formal definition of the computational problem associated to compute the SHAP score of the class of d-DNFs under Markovian distributions is given as follows:

$\bullet$ \textbf{Problem:} $\texttt{SHAP}(\texttt{d-DNF}, \overrightarrow{\texttt{MARKOV}})$ \\
$~~$ \textbf{Instance:} A d-DNF $\Phi$ over $N$ boolean variables, an instance $\overrightarrow{x} \in \{0,1\}^{N}$, an integer $i \in [N]$, $P \in \overrightarrow{\texttt{MARKOV}}_{N}$ \\
$~~$ \textbf{Output:} Compute $\overrightarrow{\text{SHAP}}(\Phi, x, i,P)$

The complexity size of the input instance of this problem is given by the number of variables of $\Phi$, denoted $|\Phi|$, and the number of clauses in the d-DNF denoted $|\Phi|_{\#}$.

The claim of this section is given in the following theorem:
\begin{theorem}\label{ddnftheorem}
    $\texttt{SHAP}(\texttt{d-DNF}, \overrightarrow{\texttt{MARKOV}})$ is in \emph{FP}. 
\end{theorem}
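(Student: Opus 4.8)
The plan is to reduce $\texttt{SHAP}(\texttt{d-DNF}, \overrightarrow{\texttt{MARKOV}})$ in polynomial time to $\texttt{SHAP}(\texttt{WA}, \texttt{MARKOV})$, which is in \emph{FP} by Theorem \ref{ffdist:mainthm}. The essential observation is that a boolean vector $\overrightarrow{x} \in \{0,1\}^N$ is just a sequence of length $N$ over the alphabet $\Sigma = \{0,1\}$, so $\overrightarrow{\texttt{MARKOV}}_N$ is literally the restriction of $\texttt{MARKOV}$ to sequences of length $N$; no distributional transformation is needed there. The $\overrightarrow{\text{SHAP}}$ formula for boolean functions (equation \eqref{shapformula}, with $v$ defined via the conditional expectation of $f_M$ over $P$) coincides term-for-term with the sequential $\text{SHAP}$ formula \eqref{shapformulaseq} once coalitions are encoded as patterns, exactly as established in the faithfulness discussion following \eqref{value}. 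Hence the only real work is to convert a d-DNF $\Phi$ over $N$ variables into a WA $A_\Phi$ over $\{0,1\}^*$ of size $O(\texttt{poly}(|\Phi|, |\Phi|_\#))$ such that $f_{A_\Phi}(w) = \Phi(w)$ for every $w \in \{0,1\}^N$ (and, say, $0$ on strings of other lengths).

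First I would build, for a single clause $C_j$ (a conjunction of literals over a subset of the $N$ variables), a small deterministic WA $A_{C_j}$ reading strings of length $N$: it has a chain of $N+1$ states, and at position $t$ the transition on symbol $b \in \{0,1\}$ goes forward iff $b$ is consistent with the literal on $X_t$ in $C_j$ (any symbol if $X_t$ is unconstrained), with $\alpha$, $\beta$ picking out the full accepting path so that $f_{A_{C_j}}(w) = 1$ iff $w$ satisfies $C_j$ and $0$ otherwise. Each such automaton has size $O(N)$. Then, because the clauses of a d-DNF are mutually exclusive, for every $w$ at most one clause fires, so
\begin{equation}\label{eq:ddnfdecomp}
\Phi(w) = \sum_{j=1}^{|\Phi|_\#} f_{A_{C_j}}(w).
\end{equation}
A WA computing the right-hand side of \eqref{eq:ddnfdecomp} is obtained by the standard direct-sum (block-diagonal) construction on the $A_{C_j}$: stack the transition matrices $A_{C_j,\sigma}$ into block-diagonal matrices and concatenate the initial and final vectors. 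This yields $A_\Phi$ with $\texttt{size}(A_\Phi) = O(N \cdot |\Phi|_\#)$, constructed in polynomial time. Crucially, disjointness is what lets us use plain summation here; for a general DNF one would need an inclusion–exclusion or determinization step that blows up, which is consistent with the cited hardness for arbitrary DNFs.

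With $A_\Phi$ in hand, the reduction is: on input $(\Phi, \overrightarrow{x}, i, P)$, construct $A_\Phi$, treat $\overrightarrow{x}$ as a string $w$ of length $N$, and output $\text{SHAP}(A_\Phi, w, i, P)$ via the \emph{FP} algorithm of Theorem \ref{ffdist:mainthm}. Correctness follows from $f_{A_\Phi}\!\restriction_{\{0,1\}^N} = \Phi$ together with the faithfulness of \eqref{shapformulaseq} to \eqref{shapformula}; the running time is polynomial since $\texttt{size}(A_\Phi)$, $|w| = N$ and $|\Sigma| = 2$ are all polynomially bounded and $P \in \overrightarrow{\texttt{MARKOV}}_N \subseteq \texttt{MARKOV}$ is already in the required queryable form. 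I expect the main obstacle to be purely bookkeeping: making sure the clause automata agree on a uniform state layout so the direct sum is well-defined, handling literals versus unconstrained positions cleanly, and double-checking that the $\overrightarrow{\text{SHAP}}$-to-$\text{SHAP}$ identification respects the normalization constants $\tfrac{|S|!(n-|S|-1)!}{n!}$ exactly as in the paragraph after \eqref{value}. The theorem for decision trees then follows as a special case, since every decision tree is trivially a d-DNF (one clause per root-to-leaf path labelled $1$, and the paths are pairwise disjoint), with $|\Phi|_\#$ equal to the number of leaves.
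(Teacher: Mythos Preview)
Your proposal is correct and follows essentially the same approach as the paper: build a small chain automaton for each clause, use disjointness to justify the block-diagonal (direct-sum) construction of $A_\Phi$, and reduce to $\texttt{SHAP}(\texttt{WA}, \texttt{MARKOV})$ via Theorem \ref{ffdist:mainthm}. The only cosmetic difference is that the paper explicitly extends $P \in \overrightarrow{\texttt{MARKOV}}_N$ to a distribution $\tilde{P} \in \texttt{MARKOV}$ over $\Sigma^\infty$ by padding with uniform transitions for positions beyond $N$, whereas you treat the inclusion $\overrightarrow{\texttt{MARKOV}}_N \subseteq \texttt{MARKOV}$ as immediate; this is harmless since only the first $N$ transition matrices are ever queried.
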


The proof of theorem \ref{ddnftheorem} will proceed by reduction to the problem $\texttt{SHAP}(\texttt{WA}, \texttt{MARKOV})$.

Before providing the details of the reduction strategy, we shall present an interesting corollary of theorem \ref{ddnftheorem}, stating that the SHAP score computational problem for the family of decision trees under Markovian distributions is in FP.
\begin{corollary}
   Denote by $\texttt{DT}$ the set of decision trees computing boolean functions. The problem $\texttt{SHAP}(\texttt{DT}, \overrightarrow{\texttt{MARKOV}})$ is in FP.
\end{corollary}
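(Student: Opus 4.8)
The plan is to reduce $\texttt{SHAP}(\texttt{DT}, \overrightarrow{\texttt{MARKOV}})$ to $\texttt{SHAP}(\texttt{d-DNF}, \overrightarrow{\texttt{MARKOV}})$ by a polynomial-time many-one reduction and then invoke Theorem \ref{ddnftheorem}. The crucial observation is that the SHAP score, as defined in \eqref{shapformula}, depends on a model $M$ only through the boolean function $f_{M}$ it computes (together with the distribution $P$, the instance $\overrightarrow{x}$, and the index $i$); hence it suffices to rewrite the function computed by a decision tree as a d-DNF over the same set of variables, leaving $P$, $\overrightarrow{x}$ and $i$ untouched.

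First I would describe the standard path-encoding of a decision tree as a DNF. Given a decision tree $T$ over boolean variables $X_{1},\dots,X_{N}$, enumerate every root-to-leaf path whose leaf is labelled $1$. Along such a path, each visited internal node tests some variable $X_{j}$ and the path follows either the $0$-branch or the $1$-branch; record the corresponding literal ($\bar{X}_{j}$ or $X_{j}$) and take the conjunction of all literals recorded along the path to obtain a clause. Let $\Phi_{T}$ be the disjunction of these clauses (a path that tests a variable twice with inconsistent outcomes yields an unsatisfiable clause, which may simply be discarded; consistent repeats are merged). It is immediate that $f_{\Phi_{T}} = f_{T}$, since an assignment $\overrightarrow{x}$ satisfies $\Phi_{T}$ exactly when the unique root-to-leaf path it induces in $T$ ends at a $1$-leaf.

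Next I would verify the two structural requirements of a d-DNF together with the polynomial size bound. For disjointness, two distinct $1$-leaves have a nearest common ancestor, an internal node testing some variable $X_{j}$ beyond which one path takes the $0$-branch and the other the $1$-branch; the two corresponding clauses therefore contain the complementary literals $\bar{X}_{j}$ and $X_{j}$, so their sets of satisfying assignments are disjoint and no assignment satisfies two clauses simultaneously. For size, the number of clauses of $\Phi_{T}$ is at most the number of leaves of $T$, so $|\Phi_{T}|_{\#} = O(|T|)$ while $|\Phi_{T}| = N$, and the whole construction is a single traversal of $T$, hence runs in $O(\texttt{poly}(|T|))$ time. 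Since $\Phi_{T}$ is over the same $N$ variables, the given $P \in \overrightarrow{\texttt{MARKOV}}_{N}$ is a legal distribution for the d-DNF instance, and $\overrightarrow{\text{SHAP}}(\Phi_{T}, \overrightarrow{x}, i, P) = \overrightarrow{\text{SHAP}}(T, \overrightarrow{x}, i, P)$. Composing the map $(T,\overrightarrow{x},i,P) \mapsto (\Phi_{T},\overrightarrow{x},i,P)$ with the FP algorithm guaranteed by Theorem \ref{ddnftheorem} then yields an FP algorithm for $\texttt{SHAP}(\texttt{DT}, \overrightarrow{\texttt{MARKOV}})$.

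I do not anticipate a genuine obstacle here: the only point requiring care is the verification that the clauses of $\Phi_{T}$ are pairwise mutually exclusive (and the harmless handling of degenerate paths that test a variable more than once), which the nearest-common-ancestor argument above settles; the remainder is a routine syntactic transformation whose polynomial running time is evident from a single pass over the tree.
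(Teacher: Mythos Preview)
Your proposal is correct and follows essentially the same approach as the paper: reduce $\texttt{SHAP}(\texttt{DT}, \overrightarrow{\texttt{MARKOV}})$ to $\texttt{SHAP}(\texttt{d-DNF}, \overrightarrow{\texttt{MARKOV}})$ via a polynomial-time conversion of a decision tree into an equivalent d-DNF, then invoke Theorem~\ref{ddnftheorem}. The only difference is that the paper dispatches the conversion by citing a known result (Property~1 of \cite{aizenstein92}), whereas you spell out the path-encoding construction and the nearest-common-ancestor disjointness argument explicitly.
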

\begin{proof}
    This result follows immediately from theorem \ref{ddnftheorem}, and the fact that given an arbitrary decision tree in $\texttt{DT}$, an equivalent d-DNF can be constructed in polynomial time with respect to the size of the decision tree (Property 1, \cite{aizenstein92}).
\end{proof}

\subsection{Proof of theorem \ref{ddnftheorem}: Reduction strategy}

Unlike WAs, d-DNFs compute boolean functions instead of languages. For the sake of the reduction, a first step consists at performing a  \textit{sequentialization} operation of the input instance of the problem $\texttt{SHAP}(\texttt{d-DNF}, \overrightarrow{\texttt{MARKOV}})$. We give next details of the construction.

$\bullet$ \textbf{Sequentialization of Markovian distributions:} For an integer $N > 0$, the sequentialization of a Markovian distribution in $\overrightarrow{\texttt{MARKOV}}_{N}$ to one in $\texttt{MARKOV}$ must ensure that both distributions are equal in the support $\{0,1\}^{N}$. Indeed, since the SHAP score of boolean functions over $N$ variables considers only the support $[N]$, the choice of the transition probability matrices for integers larger than $i>N$ can be set arbitrary, provided the resulting Markovian distribution remains polynomial-time computable. A possible sequentialization strategy of a distribution of $P$ in $\overrightarrow{\texttt{MARKOV}}_{N}$ is given by a  $\Tilde{P} \in \texttt{MARKOV}$ (which depends on $P$) such that:
$$\Tilde{P}_{init} = P_{init},~ \Tilde{P}_{i}(X_{i+1}|X_{i})  = \begin{cases}
    P_{i}(X_{i+1}|X_{i}) & \text{if  } i \in [N] \\
    P_{unif}(X_{i+1}) & \text{elsewhere}
\end{cases}$$
where $P_{unif}(X_{i+1})$ is the uniform distribution over $\{0,1\}$.

$\bullet$ \textbf{Sequentialization of d-DNFs.} 
For any integer $N>0$, and any boolean vector $\overrightarrow{X}$ over $\{0,1\}^{N}$, $\texttt{SEQ}(\overrightarrow{X})$ refers to the sequence $X_{1} \ldots X_{N}$ formed by the binary alphabet.

For a given d-DNF $\Phi$ over $N$ variables. Its sequential version is represented by the unweighted language $L_{\Phi}$ over $\Sigma^{*}$ such that:
$$L_{\Phi} \myeq \{w \in \{0,1\}^{|\Phi|}: \overrightarrow{X} = \texttt{SEQ}^{-1}(w) ~~\text{satisfies}~~ \Phi \}$$
Basically, $L_{\Phi}$ comprises the set of all satisfied assignments by the formula $\Phi$ arranged in a sequence. 
The following lemma is key to prove theorem \ref{ddnftheorem}. It establishes the existence of an algorithm that constructs in polynomial time a WA that computes the language $I_{\Phi}$.
\begin{lemma} \label{ddnfstowa}
    There exists an algorithm that takes as input a d-DNF $\Phi$, runs in time polynomial in $|\Phi|$ and $|\Phi|_{\#}$, and outputs a WA that computes the language $I_{L_{\Phi}}$. 
\end{lemma}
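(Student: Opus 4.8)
The plan is to build the WA compositionally, clause by clause, exploiting the disjointness hypothesis to turn a disjunction into a sum. First I would observe that for a single conjunctive clause $C = \bigwedge_{j \in J} \ell_j$ over the variables $X_1,\dots,X_N$ (where each literal $\ell_j$ fixes the value of $X_j$ to some bit $b_j$), the indicator language $I_{L_C}$ over $\{0,1\}^{N}$ is computed by a tiny deterministic WA: read the sequence $w = X_1\dots X_N$ left to right, and at position $j$ either $X_j$ is constrained (accept only the bit $b_j$, reject otherwise by going to a dead state with weight $0$) or $X_j$ is free (accept either bit, weight $1$). Concretely this is a WA with two states — a ``still alive'' state and an absorbing ``dead'' state — with $\alpha$ selecting the alive state, $\beta$ selecting the alive state, and transition matrices $A_0, A_1$ of size $2\times 2$ whose entries at each position $j$ are determined by whether $X_j$ appears positively, negatively, or not at all in $C$. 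Since the WA must depend on the \emph{position} in the word, I would either use a layered/position-indexed construction with $O(N)$ states (states $(j,\text{alive})$ for $j \in [N]$, plus one dead state), or simply note that the single-symbol transition matrices can be taken position-dependent in the obvious layered WA of size $O(N)$; either way $\texttt{size} = O(|\Phi|)$ and it is built in $O(|\Phi|)$ time.

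Next I would invoke the disjointness of $\Phi = C_1 \vee \dots \vee C_m$ (with $m = |\Phi|_{\#}$): because the satisfying-assignment sets of distinct clauses are pairwise disjoint, for every $w \in \{0,1\}^{N}$ we have $I_{L_\Phi}(w) = \sum_{t=1}^{m} I_{L_{C_t}}(w)$, since at most one summand is nonzero. Hence $I_{L_\Phi} = \bigoplus_{t=1}^m I_{L_{C_t}}$ as languages. So the construction is: build the $m$ clause-WAs $A_{C_1},\dots,A_{C_m}$, then take their direct sum (block-diagonal) WA $A_\Phi$, whose initial/final vectors concatenate those of the $A_{C_t}$ and whose transition matrices are block-diagonal with the $A_{C_t}$ blocks. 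This WA has size $O(m \cdot N) = O(|\Phi|_{\#} \cdot |\Phi|)$, is constructed in polynomial time, and by the disjointness identity computes $f_{A_\Phi}(w) = \sum_t f_{A_{C_t}}(w) = I_{L_\Phi}(w)$ for $w \in \{0,1\}^N$. (One should also confirm it is $0$ on words of length $\neq N$ and on the empty word, which is immediate from the layered structure, or restrict attention to length-$N$ words as the SHAP formula does.)

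The main obstacle — and it is a mild one — is handling the position dependence cleanly within the WA formalism of the paper, where transition matrices $A_\sigma$ are fixed and not indexed by position. The fix is the standard one: encode the position into the state by using a layered automaton with $N{+}1$ ``levels,'' so that a single pair $(A_0,A_1)$ of $O(N)$-dimensional matrices simulates the position-dependent behavior (level $j$ only transitions to level $j{+}1$). I would spell this out for one clause and then note the direct-sum step is purely mechanical. A secondary point worth a sentence is why disjointness is essential: without it, overlapping clauses would cause $\sum_t I_{L_{C_t}}(w) > 1$ on jointly-satisfied assignments, so the summed WA would no longer compute the $\{0,1\}$-valued indicator $I_{L_\Phi}$; disjointness is exactly what makes the inclusion–exclusion collapse to a plain sum, which is what lets us stay inside the efficiently-composable WA operations of Lemma \ref{operators}.
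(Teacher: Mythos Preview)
Your proposal is correct and follows essentially the same route as the paper: build a small position-layered DFA/WA of size $O(|\Phi|)$ for each clause (the paper phrases this as converting the clause to a pattern over $\{0,1,\#\}^{N}$ and then to a DFA), use disjointness to collapse $I_{L_\Phi}$ to the plain sum $\sum_t I_{L_{C_t}}$, and realize the sum via the block-diagonal WA construction, yielding size $O(|\Phi|\cdot|\Phi|_{\#})$. Your discussion of why a layered automaton is needed to encode position dependence, and why disjointness is what makes the sum equal the indicator, matches the paper's reasoning exactly.
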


The proof of lemma \ref{ddnfstowa} can be found in appendix \ref{dnftowa}.

Next, we provide the proof of theorem \ref{ddnftheorem}.
\begin{proof}{(Theorem \ref{ddnftheorem})}
For an input instance $<\Phi, \overrightarrow{x}, i, P>$ of the problem $\texttt{SHAP}(\texttt{d-DNF}, \texttt{MARKOV})$. One can observe that:
\begin{equation} \label{reductionddng}
\overrightarrow{\text{SHAP}}(\Phi, x,i,P) = \text{SHAP}(I_{L_{\Phi}}, \texttt{SEQ}(\overrightarrow{x}), i , \Tilde{P})
\end{equation}

Equation \eqref{ddnftheorem} suggests the following polynomial-time reduction strategy from $\texttt{SHAP}(\texttt{d-DNF}, \overrightarrow{\texttt{MARKOV}})$ to $\texttt{SHAP}(\texttt{d-DNF}, \texttt{MARKOV})$:
\begin{enumerate}
    \item Construct a WA that computes the language $I_{L_{\Phi}}$. By lemma \ref{ddnfstowa}, this can be performed in $O(\texttt{poly}(|\Phi|, |\Phi|_{\#}))$ time.
    \item Apply the $\texttt{SEQ}(.)$ operation on $\overrightarrow{x}$.
    \item Wrap the parameters of $P$ in a machine implementing $\Tilde{P}$. For an input integer $i >0$, it tests whether $i > N$. If the answer is yes, it returns the uniform distribution. Otherwise, it returns $P_{i}$. The construction of this machine runs in $O(|\Phi|)$ time. In addition, the resulting Markovian distribution is polynomial-time computable.
\end{enumerate}
\end{proof}
\section{Conclusion} \label{conclusion}
In this article, we established the tractability of the SHAP score computational problem under the Markovian assumption for the family of weighted automata and the family of disjoint-DNFs which encompasses, up to a polynomial-time reduction, the family of decision trees. The proof is constructive and is readily amenable to a translation into a practical algorithm that extends TreeSHAP to handle the Markovian case.

In conclusion, we note that, by revisiting algorithms designed to generate WTs that compute the seq2seq languages $g_{w,P}^{(1)},~g_{w,i,P}^{(2)}$ (lemma \ref{complex}),  the algorithmic construction described in this article can be easily extended to adapt to higher-order markovian distributions, e.g.$n$-gram models \cite{Fink2014}, provided the order of the distribution is of reasonably small size.  

In feature research, we aim at exploring the possibility to extend the tractability of SHAP explanations for other families of models under the Markovian assumption. An interesting family to be considered as a natural extension is the class of Deterministic Decomposable Circuits whose SHAP score computation under the feature independence assumption has been proven to be in FP \cite{arenas23}.

\section*{Acknowledgements}
We would like to express our gratitude to the anonymous reviewers for their invaluable feedback and constructive comments, which greatly contributed to enhancing the quality and clarity of this paper. In particular, we are thankful for their contributions in directing our attention to works addressing the limitations of the SHAP score.

\section*{Impact Statement}
Overall, our research contributes to the burgeoning field of explainable artificial intelligence (XAI) by focusing on the SHAP score, a widely employed method for interpreting ML models. We believe that the newly introduced theoretical tools used in our work to substantiate our findings, particularly those that establish connections between boolean circuits and finite state automata, hold the potential to inspire the development of novel algorithms within the realm of formal XAI, thus contributing to advance the field as a whole. Ultimately, our work contributes to the ongoing efforts to promote transparency, accountability, and trustworthiness in AI systems, paving the way for their responsible deployment across various domains.

\nocite{langley00}

\bibliography{example_paper}

\begin{thebibliography}{36}
\providecommand{\natexlab}[1]{#1}
\providecommand{\url}[1]{\texttt{#1}}
\expandafter\ifx\csname urlstyle\endcsname\relax
  \providecommand{\doi}[1]{doi: #1}\else
  \providecommand{\doi}{doi: \begingroup \urlstyle{rm}\Url}\fi

\bibitem[Aizenstein \& Pitt(1992)Aizenstein and Pitt]{aizenstein92}
Aizenstein, H. and Pitt, L.
\newblock Exact learning of read-k disjoint dnf and not-so-disjoint dnf.
\newblock In \emph{Proceedings of the Fifth Annual Workshop on Computational Learning Theory}, COLT '92, pp.\  71–76, New York, NY, USA, 1992. Association for Computing Machinery.
\newblock ISBN 089791497X.
\newblock \doi{10.1145/130385.130393}.

\bibitem[Arenas et~al.(2023)Arenas, Barcelo, Bertossi, and Monet]{arenas23}
Arenas, M., Barcelo, P., Bertossi, L., and Monet, M.
\newblock On the complexity of shap-score-based explanations: Tractability via knowledge compilation and non-approximability results.
\newblock \emph{Journal of Machine Learning Research}, 24\penalty0 (63):\penalty0 1--58, 2023.

\bibitem[Bassler et~al.(2006)Bassler, Gunaratne, and McCauley]{bassler2006markov}
Bassler, K.~E., Gunaratne, G.~H., and McCauley, J.~L.
\newblock Markov processes, hurst exponents, and nonlinear diffusion equations: With application to finance.
\newblock \emph{Physica A: Statistical Mechanics and its Applications}, 369\penalty0 (2):\penalty0 343--353, 2006.

\bibitem[Bertossi et~al.(2020)Bertossi, Li, Schleich, and Vagena]{bertossi2020causality}
Bertossi, L., Li, J., Schleich, M., and Vagena, D. S.~Z.
\newblock Causality-based explanation of classification outcomes.
\newblock In \emph{Proceedings of the Fourth International Workshop on Data Management for End-to-End Machine Learning (DEEM'20)}, New York, NY, USA, 2020. Association for Computing Machinery.

\bibitem[Culik \& Fri{\v{s}}(1995)Culik and Fri{\v{s}}]{culik1995weighted}
Culik, K. and Fri{\v{s}}, I.
\newblock Weighted finite transducers in image processing.
\newblock \emph{Discrete Applied Mathematics}, 58\penalty0 (3):\penalty0 223--237, 1995.

\bibitem[Culik \& Kari(1993)Culik and Kari]{CULIK1993305}
Culik, K. and Kari, J.
\newblock Image compression using weighted finite automata.
\newblock \emph{Computers \& Graphics}, 17\penalty0 (3):\penalty0 305--313, 1993.
\newblock ISSN 0097-8493.
\newblock \doi{https://doi.org/10.1016/0097-8493(93)90079-O}.

\bibitem[den Broeck et~al.(2021)den Broeck, Lykov, Schleich, and Suciu]{vander21}
den Broeck, G.~V., Lykov, A., Schleich, M., and Suciu, D.
\newblock On the tractability of shap explanations.
\newblock \emph{Proceedings of the AAAI Conference on Artificial Intelligence}, 35\penalty0 (7):\penalty0 6505--6513, May 2021.
\newblock \doi{10.1609/aaai.v35i7.16806}.

\bibitem[Deng \& Papadimitriou(1994)Deng and Papadimitriou]{deng1994complexity}
Deng, X. and Papadimitriou, C.~H.
\newblock On the complexity of cooperative solution concepts.
\newblock \emph{Mathematics of Operations Research}, 19\penalty0 (2):\penalty0 257--266, 1994.

\bibitem[Denis \& Esposito(2008)Denis and Esposito]{franc}
Denis, F. and Esposito, Y.
\newblock On rational stochastic languages.
\newblock \emph{Fundam. Inf.}, 86\penalty0 (1,2):\penalty0 41–77, apr 2008.
\newblock ISSN 0169-2968.

\bibitem[Droste \& Gastin(2009)Droste and Gastin]{Droste2009}
Droste, M. and Gastin, P.
\newblock \emph{Weighted Automata and Weighted Logics}, pp.\  175--211.
\newblock Springer Berlin Heidelberg, Berlin, Heidelberg, 2009.
\newblock ISBN 978-3-642-01492-5.
\newblock \doi{10.1007/978-3-642-01492-5_5}.

\bibitem[Eyraud \& Ayache(2021)Eyraud and Ayache]{eyraud20}
Eyraud, R. and Ayache, S.
\newblock Distillation of weighted automata from recurrent neural networks using a spectral approach.
\newblock \emph{Machine Learning}, 2021.
\newblock \doi{10.1007/s10994-021-05948-1}.

\bibitem[Fink(2014)]{Fink2014}
Fink, G.~A.
\newblock \emph{n-Gram Models}, pp.\  107--127.
\newblock Springer London, London, 2014.
\newblock ISBN 978-1-4471-6308-4.
\newblock \doi{10.1007/978-1-4471-6308-4_6}.

\bibitem[Goutsias \& Jenkinson(2013)Goutsias and Jenkinson]{goutsias13}
Goutsias, J. and Jenkinson, G.
\newblock Markovian dynamics on complex reaction networks.
\newblock \emph{Physics Reports}, 529\penalty0 (2):\penalty0 199--264, 2013.
\newblock ISSN 0370-1573.
\newblock \doi{https://doi.org/10.1016/j.physrep.2013.03.004}.
\newblock Markovian Dynamics on Complex Reaction Networks.

\bibitem[Huang \& Marques-Silva(2023)Huang and Marques-Silva]{Huang2023TheIO}
Huang, X. and Marques-Silva, J.
\newblock The inadequacy of shapley values for explainability.
\newblock \emph{ArXiv}, abs/2302.08160, 2023.
\newblock URL \url{https://api.semanticscholar.org/CorpusID:256900674}.

\bibitem[Janzing et~al.(2020)Janzing, Minorics, and Bloebaum]{janzing20a}
Janzing, D., Minorics, L., and Bloebaum, P.
\newblock Feature relevance quantification in explainable ai: A causal problem.
\newblock In Chiappa, S. and Calandra, R. (eds.), \emph{Proceedings of the Twenty Third International Conference on Artificial Intelligence and Statistics}, volume 108 of \emph{Proceedings of Machine Learning Research}, pp.\  2907--2916. PMLR, 26--28 Aug 2020.
\newblock URL \url{https://proceedings.mlr.press/v108/janzing20a.html}.

\bibitem[Kampen(2007)]{vanKampen2007}
Kampen, N. G.~V.
\newblock \emph{Stochastic Processes in Physics and Chemistry}.
\newblock North-Holland, Amsterdam, The Netherlands, 2007.

\bibitem[Kiefer et~al.(2013)Kiefer, Andrzej, Ouaknine, Wachter, and Worrell]{kiefer13}
Kiefer, S., Andrzej, S.~M., Ouaknine, J., Wachter, B., and Worrell, J.
\newblock On the complexity of equivalence and minimisation for q-weighted automata.
\newblock \emph{Log. Methods Comput. Sci.}, 9, 2013.

\bibitem[Knight \& May(2009)Knight and May]{knight2009applications}
Knight, K. and May, J.
\newblock Applications of weighted automata in natural language processing.
\newblock In \emph{Handbook of Weighted Automata}, pp.\  571--596. Springer Berlin Heidelberg, 2009.

\bibitem[Koller \& Friedman(2009)Koller and Friedman]{koller}
Koller, D. and Friedman, N.
\newblock \emph{Probabilistic Graphical Models: Principles and Techniques - Adaptive Computation and Machine Learning}.
\newblock The MIT Press, 2009.
\newblock ISBN 0262013193.

\bibitem[Kumar et~al.(2006)Kumar, Deng, and Byrne]{kumar2006weighted}
Kumar, S., Deng, Y., and Byrne, W.
\newblock A weighted finite state transducer translation template model for statistical machine translation.
\newblock \emph{Natural Language Engineering}, 12\penalty0 (1):\penalty0 35--75, 2006.

\bibitem[Lacroce et~al.(2021)Lacroce, Panangaden, and Rabusseau]{rabusseau}
Lacroce, C., Panangaden, P., and Rabusseau, G.
\newblock Extracting weighted automata for approximate minimization in language modelling.
\newblock In Chandlee, J., Eyraud, R., Heinz, J., Jardine, A., and van Zaanen, M. (eds.), \emph{Proceedings of the Fifteenth International Conference on Grammatical Inference}, volume 153 of \emph{Proceedings of Machine Learning Research}, pp.\  92--112. PMLR, 23--27 Aug 2021.

\bibitem[Lehr \& Shafran(2010)Lehr and Shafran]{lehr2010learning}
Lehr, M. and Shafran, I.
\newblock Learning a discriminative weighted finite-state transducer for speech recognition.
\newblock \emph{IEEE Transactions on Audio, Speech, and Language Processing}, 19\penalty0 (5):\penalty0 1360--1367, 2010.

\bibitem[Lundberg \& Lee(2017)Lundberg and Lee]{lundberg2017}
Lundberg, S.~M. and Lee, S.-I.
\newblock A unified approach to interpreting model predictions.
\newblock In \emph{Advances in Neural Information Processing Systems}, 2017.

\bibitem[Lundberg et~al.(2020)Lundberg, Gabriel, Chen, DeGrave, Prutkin, Nair, Katz, Himmelfarb, Bansal, and Lee]{lundbergnature}
Lundberg, S.~M., Gabriel, E., Chen, H., DeGrave, A., Prutkin, J.~M., Nair, B., Katz, R., Himmelfarb, J., Bansal, N., and Lee, S.-I.
\newblock From local explanations to global understanding with explainable ai for trees.
\newblock \emph{Nature Machine Intelligence}, 2:\penalty0 56--67, 2020.
\newblock \doi{10.1038/s42256-019-0138-9}.

\bibitem[Mohri(2004)]{Mohri2004}
Mohri, M.
\newblock \emph{Weighted Finite-State Transducer Algorithms. An Overview}, pp.\  551--563.
\newblock Springer Berlin Heidelberg, Berlin, Heidelberg, 2004.
\newblock ISBN 978-3-540-39886-8.
\newblock \doi{10.1007/978-3-540-39886-8_29}.

\bibitem[Mohri et~al.(2008)Mohri, Pereira, and Riley]{mohri2008speech}
Mohri, M., Pereira, F., and Riley, M.
\newblock Speech recognition with weighted finite-state transducers.
\newblock In \emph{Springer Handbook of Speech Processing}, pp.\  559--584. 2008.

\bibitem[Okudono et~al.(2020)Okudono, Waga, Sekiyama, and Hasuo]{okoduno19}
Okudono, T., Waga, M., Sekiyama, T., and Hasuo, I.
\newblock Weighted automata extraction from recurrent neural networks via regression on state spaces.
\newblock \emph{Proceedings of the AAAI Conference on Artificial Intelligence}, 34\penalty0 (04):\penalty0 5306--5314, Apr. 2020.
\newblock \doi{10.1609/aaai.v34i04.5977}.

\bibitem[Pereira \& Riley(1996)Pereira and Riley]{pereira1996speech}
Pereira, F.~C. and Riley, M.~D.
\newblock Speech recognition by composition of weighted finite automata.
\newblock \emph{arXiv preprint cmp-lg/9603001}, 1996.

\bibitem[Rabusseau et~al.(2019)Rabusseau, Li, and Precup]{rabusseau19}
Rabusseau, G., Li, T., and Precup, D.
\newblock Connecting weighted automata and recurrent neural networks through spectral learning.
\newblock In Chaudhuri, K. and Sugiyama, M. (eds.), \emph{Proceedings of the Twenty-Second International Conference on Artificial Intelligence and Statistics}, volume~89 of \emph{Proceedings of Machine Learning Research}, pp.\  1630--1639. PMLR, 16--18 Apr 2019.

\bibitem[Schützenberger(1961)]{schutz61}
Schützenberger, M.
\newblock On the definition of a family of automata.
\newblock \emph{Information and Control}, 4\penalty0 (2):\penalty0 245--270, 1961.
\newblock ISSN 0019-9958.
\newblock \doi{https://doi.org/10.1016/S0019-9958(61)80020-X}.

\bibitem[Shapley(1953)]{shapley1953value}
Shapley, L.~S.
\newblock A value for n-person games.
\newblock \emph{Contributions to the Theory of Games}, 2:\penalty0 307--317, 1953.

\bibitem[Sundararajan \& Najmi(2020)Sundararajan and Najmi]{sundararajan20b}
Sundararajan, M. and Najmi, A.
\newblock The many shapley values for model explanation.
\newblock In III, H.~D. and Singh, A. (eds.), \emph{Proceedings of the 37th International Conference on Machine Learning}, volume 119 of \emph{Proceedings of Machine Learning Research}, pp.\  9269--9278. PMLR, 13--18 Jul 2020.
\newblock URL \url{https://proceedings.mlr.press/v119/sundararajan20b.html}.

\bibitem[Tacettin \& \"{U}nl\"{u}yurt(2005)Tacettin and \"{U}nl\"{u}yurt]{mustapha10}
Tacettin, M. and \"{U}nl\"{u}yurt, T.
\newblock An alternative proof that exact inference problem in bayesian belief networks is np-hard.
\newblock In \emph{Proceedings of the 20th International Conference on Computer and Information Sciences}, ISCIS'05, pp.\  947–955, Berlin, Heidelberg, 2005. Springer-Verlag.
\newblock ISBN 3540294147.
\newblock \doi{10.1007/11569596_96}.

\bibitem[Weiss et~al.(2019)Weiss, Goldberg, and Yahav]{weiss}
Weiss, G., Goldberg, Y., and Yahav, E.
\newblock \emph{Learning deterministic weighted automata with queries and counterexamples}.
\newblock Curran Associates Inc., Red Hook, NY, USA, 2019.

\bibitem[Yang(2021)]{yang2022fast}
Yang, J.
\newblock Fast treeshap: Accelerating shap value computation for trees.
\newblock \emph{ArXiv}, abs/2109.09847, 2021.

\bibitem[Yu et~al.(2022)Yu, Bifet, Read, and Xu]{yu2023linear}
Yu, P., Bifet, A., Read, J., and Xu, C.
\newblock Linear tree shap.
\newblock In Koyejo, S., Mohamed, S., Agarwal, A., Belgrave, D., Cho, K., and Oh, A. (eds.), \emph{Advances in Neural Information Processing Systems}, volume~35, pp.\  25818--25828. Curran Associates, Inc., 2022.

\end{thebibliography}
\bibliographystyle{icml2024}

\newpage
\appendix
\onecolumn

\newpage
\appendix
\onecolumn

\section{Proof lemma \ref{operators}} \label{app:operators}
Lemma \ref{operators} establishes the existence of efficient procedures to compute the product, partition constant and projection operators over languages/seq2seq languages computed by means of WAs/WTs. In the same spirit of all results provided in this article, we shall provide a constructive proof of this lemma. In the sequel, we fix two finite alphabets $\Sigma,~\Delta$.

The proof will rely on the notion of \textit{Kronecker product} between matrices. A brief recall of this latter is given in the following.   

$\bullet$ \textbf{The Kronecker product:} The Kronecker product between $A \in \mathbb{R}^{n \times m}$ and $B \in \mathbb{R}^{k \times l}$, denoted $A \otimes B$, is a matrix in $\mathbb{R}^{(n \cdot k) \times (m \cdot l)}$  
constructed as follows 
 $$A \otimes B = \begin{bmatrix}
     a_{1,1} \cdot B & a_{1,2} \cdot B & \dots & a_{1,m} \cdot B] \\ 
     a_{2,1} \cdot B & a_{2,2} \cdot B & \dots &  a_{2,m} \cdot B] \\
     \vdots & \vdots & \vdots & \vdots \\ 
     a_{n,1} \cdot B & a_{n,2} \cdot B & \dots & a_{n,m} \cdot B
 \end{bmatrix}$$
 where, for $(i,j) \in [n] \times [m]$ $a_{i,j}$ corresponds to element in the $i$-th row and $j$-th column of $A$.
 
 A useful property of the Kronecker product in our context is the mixed-product property:
 \begin{proposition}{(Proposition 2.1, \cite{kiefer13})} \label{mixedproduct}
      $~~$ \textit{Given $A ,~B,~C,~D$ matrices with judicious dimensions, we have $(A \cdot B) \otimes (C \cdot D) = (A \otimes C) \cdot (B \otimes D)$}
\end{proposition}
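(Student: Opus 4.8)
The plan is to prove the mixed-product identity entrywise: I would fix an arbitrary entry of each side and show the two scalars coincide. First I would pin down consistent dimensions so that every product is well-defined, writing $A \in \mathbb{R}^{n \times m}$, $B \in \mathbb{R}^{m \times p}$, $C \in \mathbb{R}^{k \times l}$, $D \in \mathbb{R}^{l \times q}$. Then $A \cdot B \in \mathbb{R}^{n \times p}$ and $C \cdot D \in \mathbb{R}^{k \times q}$, so the left-hand side lives in $\mathbb{R}^{(nk) \times (pq)}$; on the right, $A \otimes C \in \mathbb{R}^{(nk) \times (ml)}$ and $B \otimes D \in \mathbb{R}^{(ml) \times (pq)}$, whose inner dimensions $ml$ agree, so $(A \otimes C) \cdot (B \otimes D)$ is defined and also lands in $\mathbb{R}^{(nk) \times (pq)}$. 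Checking this dimensional compatibility is the first sanity step.

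The key device is the double-index addressing convention for Kronecker products: for any $X \in \mathbb{R}^{a \times b}$ and $Y \in \mathbb{R}^{c \times d}$ I would index the rows of $X \otimes Y$ by a pair $(i,j) \in [a] \times [c]$ and its columns by a pair $(r,s) \in [b] \times [d]$, so that the block definition recalled in the excerpt yields directly $(X \otimes Y)_{(i,j),(r,s)} = X_{i,r} \cdot Y_{j,s}$. With this convention the proof reduces to a single computation. Expanding the left-hand side through the definition of matrix product inside each factor gives $((A \cdot B) \otimes (C \cdot D))_{(i,j),(r,s)} = \big(\sum_t A_{i,t} B_{t,r}\big)\big(\sum_u C_{j,u} D_{u,s}\big)$, while expanding the right-hand side through the definition of matrix product over the doubly-indexed inner dimension gives $((A \otimes C) \cdot (B \otimes D))_{(i,j),(r,s)} = \sum_{(t,u)} (A \otimes C)_{(i,j),(t,u)} (B \otimes D)_{(t,u),(r,s)} = \sum_{t,u} A_{i,t} C_{j,u} B_{t,r} D_{u,s}$.

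Finally I would invoke distributivity to rewrite the product of the two sums on the left as the double sum $\sum_{t,u} A_{i,t} B_{t,r} C_{j,u} D_{u,s}$ and reorder the scalar factors so that it matches the right-hand expression term by term, concluding equality of every entry and hence of the two matrices. The main obstacle is purely bookkeeping rather than algebraic: one must verify that the single sum over the inner dimension of $(A \otimes C) \cdot (B \otimes D)$, which ranges over the $ml$ linearized indices, corresponds faithfully to the pair of independent sums $\sum_t$ over $[m]$ and $\sum_u$ over $[l]$ appearing on the left, i.e. that summing over $(t,u) \in [m] \times [l]$ is the same as the linearized inner index. Once the double-index convention is fixed this correspondence is automatic, so the real content lies entirely in making the indexing rigorous.
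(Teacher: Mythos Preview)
Your proof is correct: the entrywise verification via the double-index convention is the standard, fully rigorous route to the mixed-product identity, and your treatment of dimensions and of the correspondence between the linearized inner index and the pair $(t,u)\in[m]\times[l]$ is careful and complete. The paper itself does not prove this proposition at all---it is stated with an external citation and used as a black box---so there is no in-paper argument to compare against; your write-up simply supplies what the paper omits.
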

 
Next, we prove the result of the three points mentioned in the lemma:

$\bullet$ \textbf{The product language of WAs:} An important property of WAs is their closure under the product operation. This fact is classical in the theory of rational languages and has been proven in Sch\"{u}tzenberger's seminal paper \cite{schutz61} where WAs have been first introduced.

For the sake of completeness, the following proposition provides the details of the construction of a WA that computes the product of two languages represented by their WAs:
\begin{proposition}
    Let $A=  <\alpha, \{A_{\sigma}\}_{\sigma \in \Sigma}, \beta>$ and $A' = <\alpha', \{A'_{\sigma}\}_{\sigma \in \Sigma}, \beta'>$ be two WAs over $\Sigma^{*}$. \\
    The WA $A \otimes A' = <\alpha \otimes \alpha', \{A_{\sigma} \otimes A'_{\sigma}\}_{\sigma \in \Sigma}, \beta \otimes \beta'>$ over $\Sigma^{*}$ computes the language
    $$f_{A \otimes A'}(w) = f_{A}(w) \cdot f_{A'}(w)$$
    for any $w \in \Sigma^{*}$.
\end{proposition}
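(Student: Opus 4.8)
The plan is to verify directly that the proposed WA $A \otimes A'$ computes the pointwise product $f_A \cdot f_{A'}$ by unrolling the definition of the language computed by a WA and applying the mixed-product property of the Kronecker product (Proposition \ref{mixedproduct}). First I would fix an arbitrary word $w \in \Sigma^{*}$ of length $m$ and write out $f_{A \otimes A'}(w) = (\alpha \otimes \alpha')^{T} \cdot \prod_{i=1}^{m}(A_{w_i} \otimes A'_{w_i}) \cdot (\beta \otimes \beta')$ from the definition of the WA $A \otimes A'$.

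The key step is an induction on $m$ (or a direct telescoping argument) showing that $\prod_{i=1}^{m}(A_{w_i} \otimes A'_{w_i}) = \left(\prod_{i=1}^{m} A_{w_i}\right) \otimes \left(\prod_{i=1}^{m} A'_{w_i}\right) = A_w \otimes A'_w$. This is an immediate consequence of repeatedly invoking the mixed-product property $(X \cdot Y) \otimes (Z \cdot W) = (X \otimes Z) \cdot (Y \otimes W)$: at each step one peels off one factor, writing $\left(\prod_{i=1}^{k} A_{w_i} \otimes \prod_{i=1}^{k} A'_{w_i}\right) \cdot (A_{w_{k+1}} \otimes A'_{w_{k+1}}) = \left(\prod_{i=1}^{k+1} A_{w_i}\right) \otimes \left(\prod_{i=1}^{k+1} A'_{w_i}\right)$. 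The base case $m=0$ (empty word) reduces to the identity $I \otimes I = I$ of the appropriate dimension, and the $m=1$ case is trivial.

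Having reduced the product of transition matrices to a single Kronecker product, I would apply the mixed-product property once more to the full expression, treating $\alpha^T$, $A_w$, $\beta$ (all of compatible dimensions, viewing $\alpha^T$ as a $1 \times n$ matrix and $\beta$ as $n \times 1$) together with their primed counterparts: $(\alpha \otimes \alpha')^{T} \cdot (A_w \otimes A'_w) \cdot (\beta \otimes \beta') = (\alpha^{T} A_w \beta) \otimes (\alpha'^{T} A'_w \beta')$. Since both factors are scalars ($1 \times 1$ matrices), their Kronecker product is just their ordinary product, yielding $f_A(w) \cdot f_{A'}(w)$, as desired. The dimension bookkeeping — that $A \otimes A'$ has size $\texttt{size}(A) \cdot \texttt{size}(A')$, hence the construction runs in $O(\texttt{poly}(\texttt{size}(A), \texttt{size}(A'), |\Sigma|))$ time — follows by inspection and establishes the complexity claim of the first point of Lemma \ref{operators}.

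I do not anticipate a genuine obstacle here; this is the classical Schützenberger construction and the only mild care needed is to state the Kronecker identities at the right level of generality (allowing rectangular and $1 \times 1$ matrices) so that one clean application of Proposition \ref{mixedproduct} handles both the transition matrices and the boundary vectors uniformly. The remaining two points of Lemma \ref{operators} (partition constant and projection) would then be handled in subsequent propositions of the appendix, building on this product construction.
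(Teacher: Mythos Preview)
Your proposal is correct and follows essentially the same approach as the paper: both arguments unroll the WA definition and apply the mixed-product property of the Kronecker product (Proposition \ref{mixedproduct}) to factor the expression into $f_A(w)\cdot f_{A'}(w)$. Your write-up is slightly more explicit (spelling out the induction on the word length and the scalar/$1\times 1$ case), but the content is identical to the paper's proof.
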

\begin{proof}
    Let $A=  <\alpha, \{A_{\sigma}\}_{\sigma \in \Sigma}, \beta>$, and $A' = <\alpha', \{A'_{\sigma}\}_{\sigma \in \Sigma}, \beta'>$ be two WAs. Denote by $A \otimes A'$ the WA $<\alpha \otimes \alpha', \{A_{\sigma} \otimes A'_{\sigma}\}_{\sigma \in \Sigma}, \beta>$.
    
    For an arbitrary string $w \in \Sigma^{*}$, we have:
    \begin{align*}
        f_{A}(w) \cdot f_{A'}(w) &= (\alpha^T \cdot \prod\limits_{i=0}^{|w|} A_{w_{i}} \cdot \beta) \cdot (\alpha'^{T} \cdot \prod\limits_{i=0}^{|w|} A'_{w_{i}} \cdot \beta') \\
        &= (\alpha^{T} \otimes \alpha'^{T}) \cdot \prod\limits_{i=1}^{|w|} (A_{w_{i}} \otimes A_{w'_{i}}) \cdot (\beta \otimes \beta') \\
        &= (\alpha \otimes \alpha')^{T} \cdot \prod\limits_{i=1}^{|w|} (A_{w_{i}} \otimes A_{w'_{i}}) \cdot (\beta \otimes \beta') \\
        &= f_{A \otimes A'}(w)
    \end{align*}
    where the second equality results from the mixed-product property of the Kronecker product (proposition \ref{mixedproduct}).
\end{proof}
The construction of the product WA runs in $O(|\Sigma| \cdot \texttt{size}^{2}(A) \cdot \texttt{size}^{2}(A'))$.

$\bullet$ \textbf{The partition constant operator of WAs:} 
The following proposition provides an implicit polynomial-time procedure that computes the quantity $|f_{A}|_{n}$ for a WA $A$.
\begin{proposition}
    Let $A = <\alpha, \{A_{\sigma}\}_{\sigma \in \Sigma}, \beta>$ be a WA over $\Sigma^{*}$ and an integer $n > 0$. We have: 
    $$|f_{A}|_{n} = \alpha^{T} \cdot (\sum\limits_{\sigma \in \Sigma} A_{\sigma})^{n} \cdot \beta$$
\end{proposition}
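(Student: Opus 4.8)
The plan is to unfold both sides and match them term by term using only the distributive law for matrix multiplication over addition. First I would write
\[
|f_{A}|_{n} = \sum\limits_{w \in \Sigma^{n}} f_{A}(w) = \sum\limits_{w \in \Sigma^{n}} \alpha^{T} \cdot A_{w} \cdot \beta = \alpha^{T} \cdot \Big( \sum\limits_{w \in \Sigma^{n}} A_{w} \Big) \cdot \beta,
\]
where the last equality uses the bilinearity of the maps $X \mapsto \alpha^{T} X$ and $X \mapsto X \beta$ to pull the fixed vectors out of the (finite) sum. It therefore suffices to establish the matrix identity $\sum_{w \in \Sigma^{n}} A_{w} = \big(\sum_{\sigma \in \Sigma} A_{\sigma}\big)^{n}$.

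For that, I would expand the $n$-fold product on the right using distributivity: since matrix multiplication distributes over addition on both sides,
\[
\Big( \sum\limits_{\sigma \in \Sigma} A_{\sigma} \Big)^{n} = \sum\limits_{(\sigma_{1}, \ldots, \sigma_{n}) \in \Sigma^{n}} A_{\sigma_{1}} A_{\sigma_{2}} \cdots A_{\sigma_{n}} = \sum\limits_{w \in \Sigma^{n}} \prod\limits_{i=1}^{n} A_{w_{i}} = \sum\limits_{w \in \Sigma^{n}} A_{w},
\]
the middle step being just a reindexing of length-$n$ symbol tuples as strings $w \in \Sigma^{n}$, and the last step the definition $A_{w} = \prod_{i=1}^{|w|} A_{w_{i}}$. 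Combining the two displays yields the claimed formula. A formal write-up would prove the expansion of $(\sum_{\sigma} A_{\sigma})^{n}$ by a routine induction on $n$ (base case $n=1$; the inductive step multiplies the identity for $n-1$ on the right by $\sum_{\sigma} A_{\sigma}$ and distributes), which is the one place where care is needed: matrix multiplication is not commutative, so the strings must be read left to right in the same order in which the matrices are multiplied, but this is exactly what the induction delivers.

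Finally, for the complexity claim in Lemma~\ref{operators}(2), I would note that the matrix $M = \sum_{\sigma \in \Sigma} A_{\sigma}$ is computed in $O(|\Sigma| \cdot \texttt{size}(A)^{2})$ time, and then the scalar $\alpha^{T} M^{n} \beta$ is obtained by $n$ successive matrix--vector multiplications starting from $\alpha^{T}$ (each costing $O(\texttt{size}(A)^{2})$), followed by a single inner product with $\beta$, for a total running time of $O(\texttt{poly}(\texttt{size}(A), n, |\Sigma|))$, as required. I do not expect any genuine obstacle here; the only subtlety is the non-commutativity remark above, which the induction handles automatically.
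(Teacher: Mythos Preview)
Your proposal is correct and follows essentially the same approach as the paper: factor out $\alpha^{T}$ and $\beta$ by linearity, then establish the matrix identity $\sum_{w\in\Sigma^{n}} A_{w} = (\sum_{\sigma\in\Sigma} A_{\sigma})^{n}$ by induction on $n$. Your added remarks on non-commutativity and the matrix--vector complexity analysis are sound and slightly more detailed than the paper's treatment.
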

\begin{proof}
    Let $A = <\alpha, \{A_{\sigma}\}_{\sigma \in \Sigma}, \beta>$ be a WA over $\Sigma^{*}$ and an integer $n > 0$.
    We first prove by induction that for any $n>0$, we have 
    \begin{equation} \label{recurrence}
        \sum\limits_{w \in \Sigma^{n}} A_{w} = (\sum\limits_{\sigma \in \Sigma} A_{\sigma})^{n} 
    \end{equation} 

    The case $n=1$ is trivial.

    Assume the expression \eqref{recurrence} is true for an integer $n >0$. Let's prove it is also the case for $n+1$. We have:
    $$\sum\limits_{w \in \Sigma^{n+1}} A_{w} = \sum\limits_{w \in \Sigma^{n}} \sum\limits_{\sigma \in \Sigma} A_{w \sigma} = \sum\limits_{w \in \Sigma^{n}} A_{w} \cdot \sum\limits_{\sigma \in \Sigma} A_{\sigma} = (\sum\limits_{\sigma \in \Sigma} A_{\sigma})^{n+1}$$ 

    which proves the equality \eqref{recurrence}.
    
    Let $A = <\alpha, \{A_{\sigma}\}_{\sigma \in \Sigma}, \beta>$. For an integer $n>0$, we have:
    $$|f_{A}|_{n} = \alpha^{T} \cdot \sum\limits_{w \in \Sigma^{n}} A_{w} \cdot \beta = \alpha^{T} \cdot (\sum\limits_{\sigma \in \Sigma} A_{\sigma})^{n} \cdot \beta$$
    where the second result is obtained from \eqref{recurrence}.
\end{proof}
The complexity of implementing this operation is given as: $O(\texttt{size}(A)^{2}(|\Sigma| + \texttt{size}(A))$.

$\bullet$ \textbf{The projection operator:}
The following proposition provides a proof of the third point of lemma \ref{operators}:
\begin{proposition} \label{projection}
     Let $\Sigma,~\Delta$ be two finite alphabets. Let $A = <\alpha, \{A_{\sigma}\}_{\sigma \in \Sigma}, \beta>$ be a WA over $\Sigma^{*}$ ,and $T = <\alpha' , \{\bar{A}_{\sigma}^{\sigma'}\}_{\sigma \in \Sigma,~\sigma' \in \Sigma'}, \beta'>$ a WT over $\Sigma^{*} \times \Delta^{*}$. The WA $\Pi(A,T) = <\alpha \otimes \alpha', \{\sum\limits_{\sigma \in \Sigma} A_{\sigma} \otimes \bar{A}_{\sigma}^{\sigma'} \}_{\sigma' \in \Delta}, \beta \otimes \beta'>$ over $\Sigma^{*}$ computes the language $\Pi(f_{A} \otimes f_{T})$.
\end{proposition}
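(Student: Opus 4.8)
The plan is to compute $f_{\Pi(A,T)}(u)$ directly from the linear representation of $\Pi(A,T)$ and check that it equals $\sum_{w \in \Sigma^{|u|}} f_{A}(w) \cdot f_{T}(w,u)$, which by definition is $\Pi(f_{A},f_{T})(u)$. Fix $u \in \Delta^{*}$ and set $m = |u|$. Unfolding the language computed by a WA gives
\begin{equation*}
f_{\Pi(A,T)}(u) = (\alpha \otimes \alpha')^{T} \cdot \prod_{i=1}^{m} \Bigl( \sum_{\sigma \in \Sigma} A_{\sigma} \otimes \bar{A}_{\sigma}^{u_{i}} \Bigr) \cdot (\beta \otimes \beta').
\end{equation*}

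First I would expand the ordered product of the $m$ sums by full distributivity, turning it into $\sum_{w \in \Sigma^{m}} \prod_{i=1}^{m}\bigl( A_{w_{i}} \otimes \bar{A}_{w_{i}}^{u_{i}} \bigr)$; to keep this rigorous I would run a short induction on $m$ rather than an informal "expand everything". Then, applying the mixed-product property (Proposition \ref{mixedproduct}) repeatedly — always pairing the $\Sigma$-component factors together and the $\Delta$-component factors together so they are never interleaved — each ordered product rewrites as $\bigl( \prod_{i=1}^{m} A_{w_{i}} \bigr) \otimes \bigl( \prod_{i=1}^{m} \bar{A}_{w_{i}}^{u_{i}} \bigr) = A_{w} \otimes \bar{A}_{w}^{u}$, where $\bar{A}_{w}^{u} \myeq \prod_{i=1}^{m}\bar{A}_{w_{i}}^{u_{i}}$ is exactly the matrix appearing in $f_{T}(w,u)$.

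Substituting back and invoking the mixed-product property once more on the boundary vectors, each summand becomes $(\alpha^{T} \otimes \alpha'^{T})(A_{w} \otimes \bar{A}_{w}^{u})(\beta \otimes \beta') = (\alpha^{T} A_{w} \beta) \otimes (\alpha'^{T} \bar{A}_{w}^{u} \beta')$; since both factors are scalars, their Kronecker product is just the ordinary product $f_{A}(w) \cdot f_{T}(w,u)$. Summing over $w \in \Sigma^{m}$ then yields $f_{\Pi(A,T)}(u) = \sum_{w \in \Sigma^{m}} f_{A}(w)\, f_{T}(w,u) = \Pi(f_{A},f_{T})(u)$. The case $m = 0$ is handled directly, the empty product being the identity, so that $f_{\Pi(A,T)}(\varepsilon) = (\alpha^{T}\beta)(\alpha'^{T}\beta') = f_{A}(\varepsilon)\, f_{T}(\varepsilon,\varepsilon)$. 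For the complexity statement I would observe that $\texttt{size}(\Pi(A,T)) = \texttt{size}(A)\cdot\texttt{size}(T)$ and that each of the $|\Delta|$ transition matrices is a sum of $|\Sigma|$ Kronecker products of matrices of sizes $\texttt{size}(A)$ and $\texttt{size}(T)$, so the construction runs in $O(|\Sigma|\cdot|\Delta|\cdot\texttt{size}(A)^{2}\cdot\texttt{size}(T)^{2})$ time, which is polynomial in the stated parameters.

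The only real obstacle is bookkeeping: making the distributive expansion of a length-$m$ ordered product of sums of Kronecker products precise, and applying the mixed-product property in the correct order so the $\Sigma$- and $\Delta$-components stay separated. No genuinely hard step is expected beyond that, since closure of (weighted) automata under such tensor constructions is classical.
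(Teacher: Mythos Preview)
Your proof is correct and follows essentially the same approach as the paper's own proof: both arguments rest on the mixed-product property of the Kronecker product together with the distributive expansion of $\prod_{i}\sum_{\sigma} A_{\sigma}\otimes\bar{A}_{\sigma}^{u_i}$ into $\sum_{w}\prod_{i} A_{w_i}\otimes\bar{A}_{w_i}^{u_i}$. The paper runs the chain of equalities in the opposite direction (starting from $\sum_{w} f_A(w)f_T(w,u)$ and collapsing to $f_{\Pi(A,T)}(u)$), but the content is identical; your write-up is in fact slightly more careful about the inductive bookkeeping, the empty-word case, and the complexity count.
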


    \begin{proof}
    Let  $A$  be a WA  over $\Sigma^{*}$, and $T$ be a WT over $\Sigma^{*} \times \Delta^{*}$. 

    Define the  WA $\Pi(A,T) = <\alpha \otimes \alpha', \{\sum\limits_{\sigma \in \Sigma} A_{\sigma} \otimes \bar{A}_{\sigma}^{\sigma'} \}_{\sigma' \in \Delta}, \beta \otimes \beta'>$ constructed from $A$ and $T$.
    
    For an arbitrary $u \in \Delta^{*}$, we have 
    \begin{align*}
        \sum\limits_{w \in \Sigma^{|u|}} f_{A}(w) \cdot f_{T}(w,u) &=  \sum\limits_{w \in \Sigma^{|u|}} (\alpha^{T} \cdot \prod\limits_{i=1}^{|w|} A_{w_{i}} \cdot \beta ) \\
        &~~~~~~\cdot (\alpha'^{T} \cdot \prod\limits_{i=1}^{|u|} \bar{A}_{w_{i}}^{u_{i}} \cdot \beta') \\
        &= \sum\limits_{w \in \Sigma^{|u|}} (\alpha \otimes \alpha')^{T} \cdot (\prod\limits_{i=1}^{|u|} A_{w_{i}} \otimes \bar{A}_{w_{i}}^{u_{i}} ) \cdot (\beta \otimes \beta') \\
        &=  (\alpha \otimes \alpha')^{T} \cdot (\prod\limits_{i=1}^{|u|} \sum\limits_{\sigma \in \Sigma} A_{\sigma} \otimes \bar{A}_{\sigma}^{u_{i}} ) \cdot (\beta \otimes \beta') \\
        &= f_{\Pi(A,T)}(u)
    \end{align*}
\end{proof}

The complexity of the construction implicitly outlined in proposition \ref{projection} is $O(\texttt{size}(A)^{2} \times \texttt{size}(T)^{2} \times |\Sigma|)$.
\section{Proof lemma \ref{ffdist:mainlemma}} \label{mainlemma}
We'll show the expression \eqref{shap1wa}. The expression \eqref{shap2wa} can be obtained by mimicking the proof herein. \\
    Let $A$ be a WA, a sequence $w \in \Sigma^{*}$, an integer $k \in [|w| - 1]$, and $P$ be an arbitrary distribution over $\Sigma^{\infty}$. Let $f_{w,k}$ (resp. $g_{w,P}^{(1)}$) a language (resp. seq2seq language) whose properties are given in the statement of the lemma. We have: 
    \begin{align*}
        \text{SHAP}_{1}(A,w,i,k,P) &=  \mathbb{E}_{p \sim \mathcal{P}_{k}^{w}} \mathbb{E}_{w' \sim P^{(|w|)}} [f_{A}(w') | w' \in L_{p}] \\ 
    &= \sum\limits_{p \in \Sigma_{\#}^{|w|}} \mathcal{P}_{k}^{w}(p) \sum\limits_{w' \in \Sigma^{|w|}} f_{A}(w') \cdot P(w' | w' \in L_{p}) \\
    &= \sum\limits_{p \in \Sigma_{\#}^{|w|}} f_{w,k}(p) \cdot \sum\limits_{w' \in \Sigma^{|w|}} f_{A}(w') \cdot g_{w,P}^{(1)}(w',p) \\
    &= \sum\limits_{p \in \Sigma_{\#}^{|w|}} f_{w,k}(p) \cdot \Pi(f_{A} , g_{w,P}^{(1)})(p) \\
    &= \sum\limits_{p \in \Sigma_{\#}^{|w|}} f_{w,k}(p) \cdot \Pi(f_{A} , g_{w,P}^{(1)})(p) \\
    &= | f_{w,k} \otimes \Pi(f_{A}, g_{w,P}^{(1)})|_{|w|} 
    \end{align*}

\section{Proof lemma \ref{complex}} \label{proofconstruction}
The core statement of lemma \ref{complex} encompasses three results stating the existence of three efficient algorithmic procedures, namely $\mathcal{A}_{1},~\mathcal{A}_{2}$ and $\mathcal{A}_{3}$, that construct a collection of WAs/WTs whose characteristics are given in the lemma statement. 

This appendix will be split into two segments. The first segment furnishes the algorithmic construction of $\mathcal{A}_{1}$. Due to the close similarities of algorithms $\mathcal{A}_{2},~\mathcal{A}_{3}$, they shall be treated simultaneously in the second segment. 

Before outlining these constructions, we furnish a brief recall of some sub-families of WAs and WTs serving as a technical background on top of which the proof will be built. In particular, three sub-families will be introduced: Determinstic Finite Automata, deterministic WAs, and Deterministic Finite Transducers. 

In the sequel, we fix an alphabet $\Sigma,~\Delta$.

$\bullet$ \textbf{Deterministic finite Automata.} The class of deterministic finite automata (DFAs) is a popular sub-family of WAs adapted to model unweighted languages. A DFA is formally represented by a tuple $<Q, q_{init}, \delta, F>$, where:

\begin{itemize}
    \item $Q$ is a finite set of states,
    \item $q_{init} \in Q$ is called the initial state,
    \item $\delta: Q \times \Sigma \rightarrow Q$ is a partial function \footnote{A partial function $f$ from a set $X$ to $Y$ is a function whose input domain is a subset of $X$ (i.e. it doesn't necessarily assign an output to every element of $x$)} called the transition function,
    \item $F \subseteq Q$ is called the the set of final states,
\end{itemize}

For a DFA $A = <Q, q_{init}, \delta, F>$, a valid path  over $A$ labeled by a sequence $w \in \Sigma^{*}$ is a sequence of state-symbol pairs taking the form:
$q_{0} w_{1} q_{1} \ldots w_{|w|} q_{|w|}$, such that  for any $i \in \{0, \ldots |w|- 1\}: \delta(q_{i}, w_{i+1}) = q_{i+1}$. A valid path labeled by $w$ is said to be accepting if $q_{0} = q_{init}$ and $q_{|w|} \in F$.  

An important property of DFAs lies in that the cardinality of the set of its valid paths labeled by an arbitrary sequence $w \in \Sigma^{*}$ is at most equal to $1$. The unweighted language accepted by a DFA corresponds to the set of sequences that label a valid accepting path over the DFA. 

$\bullet$ \textbf{Deterministic Finite Transducers.} Deterministic Finite Transducers (DFTs) represent the analogous counterpart of DFAs adapted to seq2seq languages, and constitutes a sub-family of WTs that compute unweighted seq2seq languages. A DFT over $\Sigma \times \Delta$ is formally represented by a tuple $<Q, q_{init}, \delta, F>$ 
\begin{itemize}
    \item $Q$ is a finite set of states.
    \item $q_{init}$ is the initial state.
    \item $\delta: Q \times \Sigma \rightarrow \Delta \times Q$ is a partial function called the transition function.
    \item $F$ is called the set of final states.
\end{itemize}

The formal description of a DFT resembles to that of DFAs, and operates in a closely similar manner. 

For a DFT $T = <Q,q_{init}, \delta, F>$, a valid path over $T$ labeled by a pair of sequences $(u,v) \in \Sigma^{*} \times \Delta^{*}$ such that $|u| = |v| = n$ is a sequence of elements in $Q \times \Sigma \times \Delta$ taking the form: $q_{0}u_{1} v_{1} q_{1} \ldots u_{n} v_{n} q_{n}$ where for any $i \in \{0, \ldots n- 1\}: \delta(q_{i}, u_{i+1}) = (v_{i+1},q_{i+1})$. A valid path over $(u,v) \in \Sigma^{*} \times \Delta^{*}$ is said to be accepting if $q_{0} = q_{init}$ and $q_{n} \in F$.

DFTs enjoy a similar property than DFAs in that for any pair of sequences over $\Sigma^{*} \times \Delta^{*}$ with the same length, there exists at most one valid path labeled by this pair. The unweighted seq2seq language accepted by a DFT is equal to the set of sequence pairs over $\Sigma^{*} \times \Delta^{*}$ that label a valid accepting path. 

$\bullet$ \textbf{Deterministic Weighted Automata.} DWAs is the weighted variant of DFAs. It aligns with the structure of DFA while augmenting its transitions with real-valued weights. Formally, a DWA is defined as follows:
\begin{itemize}
    \item $Q$ is a finite set of states,
    \item $q_{init} \in Q$ is called the initial state,
    \item $W: Q \times \Sigma \rightarrow Q \times \mathbb{R}$ is a partial function \footnote{A partial function $f$ from a set $X$ to $Y$ is a function whose input domain is a subset of $X$ (i.e. it doesn't necessarily assign an output to all elements of $x$)} called the weight function,
    \item $F \subseteq Q$ is called the the set of final states,
\end{itemize}

Similar to DFAs, any sequence $w \in \Sigma^{*}$ labels at most a valid path, where the notion of a valid path is equivalent to that of DFAs. However, unlike DFAs, valid paths are assigned real-valued weighted instead of the boolean notion of acceptability. The weight assigned to a path starting from the initial state $q_{init}w_{1} \ldots q_{n-1}w_{n-1}q_{n}$ is equal to:

$$\cdot \prod\limits_{i=1}^{n-1}W(q_{i},w_{i})[2] \cdot I_{F}(q_{n})$$
where $W(q,\sigma)[2]$ refers to the weight associated to the transition $\delta(q,\sigma)$.

This weight coincides with the value assigned to the sequence $w_{1} \ldots w_{n}$ by the seq2seq language computed by the WT. Sequences that label no valid path are assigned the weight $0$ by default.

After presenting this brief technical background, we are now ready to prove the core statement of the lemma:

\subsection{Construction of $\mathcal{A}_{1}$.}
Recall that $\mathcal{A}_{1}$ refers to an algorithm that takes as input a string $w \in \Sigma^{*}$, an integer $k \in [|w|]^{2}$, runs in $O(\texttt{poly}(|w|))$, and outputs a WA over $\Sigma_{\#}^{*}$ that computes the language $\mathcal{P}_{k}^{w}$. The probability distribution $\mathcal{P}_{k}^{w}$ refers to the uniform distribution over the set of patterns:
 $$\mathcal{L}_{k}^{w} \myeq \{ p \in \Sigma_{\#}^{|w|}: ~~|p|_{\#} = k \land w \in L_{p} \}$$

  The algorithmic construction of $\mathcal{A}_{1}$ aligns with two sequential steps:
  \begin{enumerate}
      \item Create a DFA over $\Sigma_{\#}^{*}$ that accepts the language $\mathcal{L}_{k}^{w}$,
      \item Normalize the resulting DFA by the quantity $\frac{1}{|\mathcal{L}_{k}^{w}|}$ to obtain the output WA. Note that $|\mathcal{L}_{k}^{w}|$ is equal to $\frac{|w|!}{ (k)! \cdot (|w| - k)!}$ and can be computed in $O(\texttt{poly}(|w|))$ time. 
  \end{enumerate} 
  
  The second step of the algorithmic construction, i.e. the normalization step, is straightforward. Indeed, given a WA $A = <\alpha, \{A_{\sigma}\}_{\sigma \in \Sigma}, \beta>$ and a normalizing constant $C \in \mathbb{R}$, the WA $A' = <C \cdot \alpha, \{A_{\sigma}\}_{\sigma \in \Sigma}, \beta>$ computes the (normalized) language $f_{A'} = C \cdot f_{A}$. In addition, it's easy to observe that this operation can be performed in polynomial time with respect to the size of $A$. Our claim, that we shall prove next in this subsection, is that the size of the DFA $A$ is $O(\texttt{poly}(|w|))$. Assuming this claim holds, the normalization operation runs in $(\texttt{poly}(|w|)$.

  The rest of this subsection will focus on the first step of the algorithmic construction: 
  
$\bullet$ \textit{Creation of a DFA that accepts the language $\mathcal{L}_{k}^{w}$:}\\ 
Fix an input instance $w \in \Sigma^{*}$, $k \in [|w|]$. A key observation for the DFA construction consists at noting that, during a forward processing run over an input pattern to check its membership in $\mathcal{L}_{k}^{w}$, a sufficient information to keep of the run's history is summarized in the following:
  \begin{itemize}
      \item \textit{The position of the next symbol:} This information is useful to ensure that the input pattern satisfies the constraint $w \in L_{p}$ imposed by definition of $\mathcal{L}_{k}^{w}$. Additionally, this information will enable rejecting the patterns whose length is greater than $|w|$. In our case, this information lies in the interval $\{0,1, \ldots, |w|\}$,
      \item \textit{The number of occurrences of the symbol $\#$ in the processed prefix of the input pattern:} This information enables to ensure that only patterns that satisfies the constraint $|p|_{\#} = k$ will be accepted. In our case, this information lies in the range $\{0,1, \ldots, k \}$. 
  \end{itemize}
 In light of this discussion, the construction of the  DFA that accepts the language $\mathcal{L}_{k}^{w}$:
 \begin{itemize}
\item \textbf{The state space:} $Q = \{0,1, \ldots ,|w|] \times \{0, 1, \ldots , k\}$ 
\item \textbf{The initial state:} $q_{init}=(0,0)$. The first element of the pair signifies that the forward run is at position $0$ (i.e. no symbol in the input pattern has been processed so far). The second element signifies that $0$ occurrences of the symbol $\#$ has been encountered in the processed input pattern so far.
\item \textbf{The transition function:} 
 For a state $(l,l') \in \{0, \ldots, |w| -1\} \times \{0, \ldots, k - 1 \}$
 
  \textit{Case 1 ($p_{l+1} = \#$).} We increment both the number of occurences of $\#$ in the input pattern and the position of the sequence by 1 which entails a transition to $(l+1,l'+1)$:
  $$\delta((l,l') , \#) =  (l+1, l'+ 1)$$ 

  \textit{Case 2 ($p_{l+1} = w_{l+1}$).} we increment the position of the input pattern to $l+1$ without incrementing the number of occurrences of $\#$. 
  $$\delta((l,l'), w_{l+1})) = (l+1,l')$$

  No other transitions are added to the transition map for all the other cases. 
  
 \item \textbf{The final set of states:}  $F = \{(|w| , k)\}$.
\end{itemize}
One can check that the complexity of this algorithmic construction runs in $O(|w|^{2})$ time.

\subsection{Constructions of $\mathcal{A}_{2}$ and $\mathcal{A}_{3}$.}
Due to the close similarities in the construction of algorithms $\mathcal{A}_{2}$ and $\mathcal{A}_{3}$, we dedicate this segment to treat both algorithms simultaneously. The presence of the $\texttt{swap}(.)$ operation in $\mathcal{A}_{3}$ brings an additional difficulty to this latter, when compared to $\mathcal{A}_{2}$. Consequently, we choose to treat $\mathcal{A}_{3}$ as a main case. The subtle differences between $\mathcal{A}_{2}$ and $\mathcal{A}_{3}$ will take the form of notes where these differences will be highlighted. 

In lemma \ref{ffdist:mainlemma}, $\mathcal{A}_{3}$ designates an algorithm that takes as input a string $w \in \Sigma^{*}$, an integer $i\in [|w|]$, a probability distribution $P \in \texttt{MARKOV}$, and outputs a WT $T_{w,P}$ over $\Sigma^{*} \times \Sigma_{\#}^{*}$ that computes a seq2seq language that satisfies the following constraint:
\begin{equation}
\forall(w',p) \in \Sigma^{|w|} \times \Sigma_{\#}^{|w|}: f(w',p) =  P(w' | w' \in L_{\texttt{swap}(p,i)})
\end{equation}

 Instead of this formulation, we'll exploit an equivalent re-expression of the constraint in the algorithmic design obtained using Bayes' rule:
\begin{equation} \label{algo2constraint}
    \forall(p,w') \in \Sigma^{|w|} \times \Sigma_{\#}^{|w|}: f(w',p) = \frac{P(w') \cdot I_{L_{\texttt{swap}(p,i)}}(w')}{P(L_{\texttt{swap}(p,i)})}
\end{equation} 

The algorithms $\mathcal{A}_{2}$, and $\mathcal{A}_{3}$ will be designed following the same paradigm employed to construct the main algorithm for solving $\texttt{SHAP}(\texttt{WA}, \texttt{MARKOV})$. Specifically, it will involve the construction of WAs/WTs that compute languages dependent on the input instance of the problem. Then, the application of efficiently computable operators over these constructed WAs/WTs will yield a WT that satisfies the constraint \eqref{algo2constraint}.

Besides operators introduced in section \ref{background}, namely the product operator, the partition constant operator and the projection operator, we shall introduce two additional operators over seq2seq languages which will be useful in this context. An emphasis will be put on the computational efficiency of implementing these operators for the case of seq2seq languages represented by WTs. 

Fix two finite alphabets $\Sigma$ and $\Delta$.

$\bullet$ \textbf{The inverse operator:} The inverse operator takes as input a language a seq2seq language $\Sigma^{*} \times \Delta^{*}$ $f$, and returns the seq2seq language denoted $\texttt{inv}(f)$ such that:
$$\texttt{inv}(f)(u,s) \myeq f(s,u)$$
for $(u,s) \in \Sigma^{*} \times \Delta^{*}$ such that $|u| = |s|$.

This operator settles for performing a swap operation of the arguments given to compute the seq2seq language for a given pair of sequences.

When a seq2seq language over $\Sigma^{*} \times \Delta^{*}$ is computed by a WT $T = <\alpha, \{A_{\sigma}^{\sigma'}\}_{(\sigma, \sigma') \in \Sigma \times \Delta}, \beta>$, the WT that computes the seq2seq language $\texttt{inv}(f_{T})$ can be trivially obtained as $<\alpha, \{A_{\sigma'}^{\sigma}\}_{(\sigma',\sigma) \in \Delta \times \Sigma}, \beta>$.

$\bullet$ \textbf{The multiplicative operator:} This operator, which we'll refer to as \textit{the multiplicative operator}, takes as input a language $f$ over $\Sigma^{*}$ and a seq2seq language $g$ over $\Sigma^{*} \times \Delta^{*}$, and outputs a seq2seq language over $\Sigma^{*} \times \Delta^{*}$, denoted $f \times g$ such that
\begin{equation} \label{atimesb}
(f \times g) (u,s) = f(u) \cdot g(u,s)
\end{equation}
for any $(u,s) \in \Sigma^{*} \times \Delta^{*}$ such that $|u| = |s|$.

When the language $f$ and the seq2seq language $g$ given as arguments to this operator are represented by a WA $A$ and a WT $T$, respectively, then $f \times g$ can be computed by a WT. Moreover, the construction of this WT can be performed in time polynomial in the size of $A$ and $T$. The followin proposition provides a proof of this fact:
\begin{lemma}
    Let $A = <\alpha, \{A_{\sigma}\}_{\sigma \in \Sigma}, \beta>$ be a WA over $\Sigma^{*}$, $T = <\alpha' , \{B_{\sigma}\}_{\sigma \in \Sigma}^{\sigma' \in \Delta}, \beta'>$ a WT over $\Sigma^{*} \times \Delta^{*}$.
    
    The WT $A \times T = <\alpha \otimes \alpha', \{A_{\sigma } \otimes B_{\sigma}^{\sigma'}\}_{\sigma \in  \Sigma}^{\sigma' \in \Delta}, \beta \otimes \beta'>$ over $\Sigma^{*} \times \Delta^{*}$ computes the seq2seq language $f_{A \times T}$.
\end{lemma}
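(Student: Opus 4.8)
The plan is to prove the claimed identity
$$
f_{A \times T}(u,s) = f_A(u) \cdot f_T(u,s)
$$
for all $(u,s) \in \Sigma^{*} \times \Delta^{*}$ with $|u| = |s|$, by a direct computation that mirrors the proofs already given for the product operator and the projection operator (Proposition in Appendix~\ref{app:operators}). The whole argument rests on a single algebraic ingredient: the mixed-product property of the Kronecker product (Proposition~\ref{mixedproduct}), which lets us turn a product of two scalar quantities, each written as $\alpha^{T}(\prod_i M_i)\beta$, into one scalar quantity of the same shape over the Kronecker-combined matrices.

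Concretely, first I would fix $(u,s)$ with $n \myeq |u| = |s|$ and write out both factors in their linear-representation form: $f_A(u) = \alpha^{T}\big(\prod_{i=1}^{n} A_{u_i}\big)\beta$ and $f_T(u,s) = \alpha'^{T}\big(\prod_{i=1}^{n} B_{u_i}^{s_i}\big)\beta'$. Since each of these is a scalar ($1\times 1$ matrix), their product equals their Kronecker product, so $f_A(u)\cdot f_T(u,s) = \big(\alpha^{T}\prod_i A_{u_i}\,\beta\big)\otimes\big(\alpha'^{T}\prod_i B_{u_i}^{s_i}\,\beta'\big)$. Then I would apply the mixed-product property repeatedly — once to split off the initial vectors, once for each of the $n$ transition-matrix factors, and once for the final vectors — to regroup this as $(\alpha\otimes\alpha')^{T}\big(\prod_{i=1}^{n}(A_{u_i}\otimes B_{u_i}^{s_i})\big)(\beta\otimes\beta')$. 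This last expression is by definition $f_{A\times T}(u,s)$ for the WT $A\times T = \langle \alpha\otimes\alpha', \{A_\sigma \otimes B_\sigma^{\sigma'}\}_{\sigma\in\Sigma}^{\sigma'\in\Delta}, \beta\otimes\beta'\rangle$, which closes the identity. I would also remark that the dimensions match: the state dimension of $A\times T$ is $\texttt{size}(A)\cdot\texttt{size}(T)$, so the construction is carried out in $O(\texttt{poly}(\texttt{size}(A),\texttt{size}(T),|\Sigma|,|\Delta|))$ time, which is the efficiency claim implicit in the statement.

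I do not expect a genuine obstacle here — this lemma is structurally identical to the product-WA proposition, with the only wrinkle being that the second machine carries an output label $\sigma'$ that is simply carried along passively (it is not summed over, unlike in the projection operator). The one point requiring a little care is bookkeeping the transpose: $\big(\alpha^{T}\otimes\alpha'^{T}\big) = (\alpha\otimes\alpha')^{T}$, which follows because transpose distributes over the Kronecker product; I would state this explicitly as in the product-WA proof. If one wanted to be fully rigorous about the repeated application of the mixed-product property across $n$ factors, a one-line induction on $n$ suffices, exactly paralleling the induction used for the partition-constant proposition, but I would likely just invoke the property ``repeatedly'' as is done elsewhere in the appendix.
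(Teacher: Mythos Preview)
Your proposal is correct and follows essentially the same route as the paper's proof: fix $(u,s)$, expand both $f_A(u)$ and $f_T(u,s)$ in their linear representations, and apply the mixed-product property of the Kronecker product (Proposition~\ref{mixedproduct}) to regroup the product as $(\alpha\otimes\alpha')^{T}\big(\prod_i A_{u_i}\otimes B_{u_i}^{s_i}\big)(\beta\otimes\beta')$. Your write-up is in fact slightly more explicit than the paper's about the scalar-equals-Kronecker step and the transpose bookkeeping, but the argument is identical in substance.
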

\begin{proof}
    Let $A = <\alpha, \{A_{\sigma}\}_{\sigma \in \Sigma}, \beta>$ be a WA over $\Sigma^{*}$, $B = <\alpha' , \{B_{\sigma}\}_{\sigma \in \Sigma}^{\sigma' \in \Delta}, \beta'>$ a WT over $\Sigma_{*} \times \Delta^{*}$. Let $A \times B = <\alpha \otimes \alpha', \{A_{\sigma } \otimes B_{\sigma}^{\sigma'}\}_{\sigma \in  \Sigma}^{\sigma' \in \Delta}, \beta \otimes \beta'>$ be the constructed WT from $A$ and $B$. 
    
    Fix a pair  $(u,s) \in \Sigma^{*} \times \Delta^{*}$ such that $|u| = |s|$. We have  
    \begin{align*}
        f_{A}(u) \cdot f_{T}(u,s) &= (\alpha^{T} \cdot \prod\limits_{i=1}^{|u|} A_{u_{i}} \cdot \beta) \cdot (\alpha'^{T} \cdot \prod\limits_{i=1}^{|w|} B_{w_{i}}^{s_{i}} \cdot \beta') \\
        &= (\alpha \otimes \alpha')^{T} \cdot \prod\limits_{i=1}^{|u|} (A_{u_{i}} \otimes B_{w_{i}}^{s_{i}})  \cdot (\beta \otimes \beta') \\
        &= f_{A \times B}(u,s)
    \end{align*}
    where the second equality is an application of the mixed product property of the Kronecker product (proposition \ref{mixedproduct}).
\end{proof}

After introducing the inverse and the multiplicative operator, we are now ready to provide the overall structure of algorithms $\mathcal{A}_{2}$ and $\mathcal{A}_{3}$. 

Fix an input instance $w \in \Sigma^{*}$, a pair of integers $i \in [|w|]$, and $P \in \texttt{MARKOV}$.

The algorithm $\mathcal{A}_{3}$ will follow three  steps:
\begin{itemize}
    \item \textit{Step 1:} Construct a DWA, denoted $A_{w,P}$, over $\Sigma^{*}$ that computes the language
    \begin{equation} \label{Awp}
    f_{A_{w,P}}(w') = \begin{cases}
    P(w') & \text{if } w' \in \Sigma^{|w|} \\
    0 & \text{elsewhere}
    \end{cases}
    \end{equation}
    \item \textit{Step 2:} Construct a DFT, denoted $T_{i}$, over $\Sigma^{*} \times \Sigma_{\#}^{*}$ that computes the (unweighted) seq2seq language:
    \begin{equation} \label{twi}
    f_{T_{w,i}}(w',p) = I_{L_{\texttt{swap}(p,i)}}(w')
    \end{equation}
    for any pair $(w',p) \in \Sigma^{*} \times \Sigma_{\#}^{*}$.
    \item \textit{Step 3:} Construct a DWT over $\Sigma_{\#}^{*}$, denoted $A_{w,i,P}$ that computes a language over $\Sigma_{\#}^{*}$ such that: 
    \begin{equation} \label{twip}
    f_{A_{w,i,P}}(p) = \frac{1}{P(L_{\texttt{swap}(p,i)})}
    \end{equation}
    for any $p \in \Sigma_{\#}^{|w|}$.
\end{itemize}

Assume we have the WAs $A_{w,P}, A_{w,i,P}$ and the WT $T_{w,i}$ that compute languages/seq2seq languages described in steps 1, 2 and 3, respectively. In light of the equation \eqref{algo2constraint}, 
the seq2seq language computed by the WT 
$$\texttt{inv}(A_{w,i,P} \times \texttt{inv}(A_{w,P} \times T_{i}))$$ 
 satisfies the constraint of the seq2seq language $g_{w,i,P}^{(1)}$. This resulting WT represents the output of $\mathcal{A}_{3}$.

$\bullet$ \textbf{Note.} \textit{At this stage, a slight difference between $\mathcal{A}_{2}$ and $\mathcal{A}_{3}$ lies in steps 2 and 3. For the case of $\mathcal{A}_{2}$, the pattern $\texttt{swap}(p,i)$ should be replaced by $p$ in equations \eqref{twi} and \eqref{twip}, in which case a different DFT and DWT have to be designed to compute these set of languages/seq2seq languages. Later in this segment, we shall highlight their construction.}

It's left to show how to construct these three machines in polynomial time with respect to the size of the input instance. The constructions of $A_{w,P}$ and $T_{w,i}$ are relatively easy. The construction $A_{w,i,P}$ is more challenging.

The remainder of this section will be split in three segments, each of which is dedicated to provide the implementation details of one of the steps of the algorithmic structure outlined above.

\subsubsection{Step 1: Construction of $A_{w,P}$.}
$A_{w,P}$ refers to the WA that computes the language expressed in \ref{Awp}. 

Given a string $w \in \Sigma^{*}$ and $P \in \texttt{MARKOV}$. A Markovian distribution over the finite support $\Sigma^{|w|}$ can be easily simulated by a DWA. The construction consists at maintaining in the state memory of the DWA the position reached so far in the sequence and the last generated symbol. These two pieces of information are sufficient to simulate a Markovian distribution.

For the sake of the construction, we add a new symbol, denoted $<BOS>$, that refers to the beginning of a sequence. 

The outline of the construction is given as follows:
\begin{itemize}
    \item \textit{The state space:} $Q = \{0,1,..,|w|\} \times (\Sigma \cup < \text{BOS}>)$,
    \item \textit{The initial state:}   $q_{init} = (0, <\text{BOS}>)$
   \item  \textit{The weight function:} Let $q=(i, \sigma)$ be a state in $Q$. We denote by $\sigma'$ an arbitrary symbol in $\Sigma$. We distinguish between two cases: 
    \begin{itemize}
        \item \textit{Case 1 $\left((i,\sigma) = (0, <BOS>)\right)$:} 
        $$W((0,<BOS>), \sigma') = \left((1, \sigma'), P_{init} (\sigma')\right)$$
        \item \textit{Case 2 $\left(i < |w|\right)$:}
         $$W((i,\sigma), \sigma') =( (i+1, \sigma') = P_{i}(\sigma'|\sigma))$$
    \end{itemize}
    \item \textit{The final weight vector:} $F = \{(|w|, \sigma):~ \sigma \in \Sigma \}$
\end{itemize}
A valid path labeled by a sequence $w' \in \Sigma^{|w|}$ over the constructed DWA is given as: $$(0,<BOS>)w'_{1}(1,w'_{1}) \ldots  (|w| - 1, w'_{|w|-1}) w'_{|w|} (|w|, w'_{|w|})$$
The weight of this path is equal to $P_{init}(w'_{1}) \cdot \prod\limits_{i=1}^{|w|-1} P_{i} (w'_{i+1}|w'_{i}) \cdot I_{F}\left((|w|, w'_{|w|})\right) = P(w')$.

Provided $P$ is polynomial-time computable, this construction runs in $O(\texttt{poly}(|w|, |\Sigma))$ time.
\subsubsection{Step 2: Construction of $T_{i}$.}
Given an integer $i > 0$, the goal is to construct a DFT $T_{i}$ over $\Sigma^{*} \times \Sigma_{\#}^{*}$ that computes the seq2seq language whose expression is given in \eqref{twi}. 

The construction is relatively easy. The state of the DFT will keep in its memory the current position of the pair of sequences being parsed up to position $i$. At a position $j<i$, the DFT will enable a transition from a state $j$ to a state $j+1$ if and only if the current pair of symbols to parse $(w'_{j+1}, p_{j+1})$ satisfies the constraint $(w'_{j+1} = p_{j+1}) \lor p_{j+1} = \#)$,. For the particular case, $j=i-1$, where the swap operation needs to be taken into account, a transition is allowed to $j+1$ regardless of the pair of symbols $(p_{i},w'_{i})$ fed to the DFT. 

\begin{figure}\label{fig:Ti}
     \centering
    \begin{tikzpicture}[node distance = 2.5cm, on grid, auto]
    \node (q0) [state, accepting, initial] {$0$};
    \node (q1) [state, accepting, right = of q0]  {$1$};
    \node (q2) [state, accepting, right = of q1]  {$2$};
    \node (q3) [state, accepting, right = of q2] {$3$};
    \path[->] (q0) edge [above] node {$\sigma : \sigma$} (q1);
    \path[->] (q0) edge [below] node {$\sigma : \#$} (q1);

    \path[->] (q1) edge [above] node {$\sigma : \sigma$} (q2);
    \path[->] (q1) edge [below] node {$\sigma : \#$} (q2);

    \path[->] (q2) edge [above] node {$\sigma : \sigma'$} (q3);

    \path[->] (q3) edge [loop above] node {$\sigma : \sigma,~\sigma: \#$} ();
\end{tikzpicture}
 \caption{A DFT $T_{i}$ that computes the seq2seq language $g(w',p) = I_{L_{\texttt{swap}(p,i)}}(w')$ for $i = 3$. $\sigma$ (resp. $\sigma'$) refers to any symbol in $\Sigma$ (resp. $\Sigma_{\#}$. \label{fig:Ti}}
\end{figure}
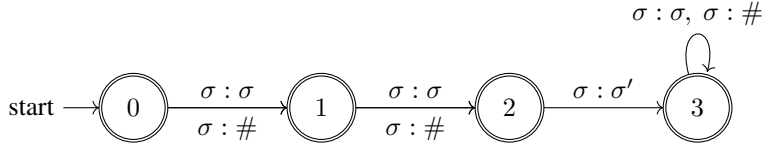

The formal description of a DFT $T_{i}$ is given in the following. An illustrative example of this construction is given in figure \ref{fig:Ti}.
\begin{itemize}
\item \textit{The state space:} $Q = \{0,1, \ldots, i\}$
\item \textit{The initial state:} $q_{init} = 0$,
\item \textit{The transition function:} Let $j$ be a state in $Q$. We distinguish between three cases:
\begin{enumerate}
    \item \textit{Case 1 ($j < i - 1 $):}
    $$\delta(j, (\sigma,\sigma')) = j+1$$
    for $(\sigma, \sigma') \in \Sigma \times \Sigma_{\#}$ such that ($\sigma = \sigma' \lor \sigma' = \#$)
    \item \textit{Case 2 ($j=i-1$):} 
    $$\delta(j, (\sigma, \sigma')) = j +1$$
    for any pair of symbols $(\sigma,\sigma') \in \Sigma \times \Sigma_{\#}$
    \item \textit{Case 3 ($j = i$):}
    $$w(i, (\sigma, \sigma')) = i$$
    for $(\sigma, \sigma') \in \Sigma \times \Sigma_{\#}$ such that ($\sigma = \sigma' \lor \sigma' = \#$)
\end{enumerate}
 \item \textit{The set of final states:} $Q$.  
\end{itemize}

$\bullet$ \textbf{Note.} \textit{For the case of the algorithm $\mathcal{A}_{2}$, a DFT that computes the seq2seq language $f(p,w') = I_{L_{p}}(w')$ is a  trivial single-state DFT that settles for testing at each step during the forward run whether the pair of input symbols $(\sigma, \sigma') \in \Sigma \times \Sigma_{\#}$ satisfies the constraint: $\sigma' = \sigma \lor \sigma' = \#$. 
}

\subsubsection{Step 3: Construction of $A_{w,i,P}$.}
In the remainder of this segment, we fix a string $w \in \Sigma^{*}$, and an integer $i \in [|w|]$, and $P \in \texttt{MARKOV}$.

Recall that the DWA $A_{w,i,P}$ over $\Sigma_{\#}^{*}$ is required to compute a language that satisfies the constraint \eqref{twip}. The construction of the DWA $A_{w,i,P}$ is more challenging than the construction pf $A_{w,P}$ and $T_{i}$ detailed in previous segments. The difficulty lies in the fact that, unlike the product operation, the set of WAs is not closed under the division operation.

By means of Bayes' rule, the constraint \eqref{twip} is explicitly given as
\begin{equation} \label{computesipw}
\forall p \in \Sigma_{\#}^{|w|}:~~f_{A_{w,i,P}}(p) = \frac{1}{P_{init}(w'_{1} \in L_{\texttt{swap}(p,i)_{1}})} \cdot \frac{1}{ \prod\limits_{j=1}^{|w|-1} P (w'_{j+1} \in L_{\texttt{swap}(p,i)_{j+1}}|w'_{1:j} \in L_{\texttt{swap}(p,i)_{1:j}})}
\end{equation}
 
When trying to construct a DWA that satisfies the formula \eqref{computesipw}, a difficulty arises by noting that the product terms forming the right-side of the equation requires maintaining the full history of the input pattern. A construction of a DWA that na\"ively simulates the equation \eqref{computesipw} would have a state space whose size is $O(|\Sigma|^{|w|})$. 

To circumvent this issue, an intermediary question to raise is concerned with the size of the minimal sufficient information to hold about a running pattern $p_{1,j}$ to compute the quantity $P(w'_{j+1} \in L_{p_{j+1}}|w'_{1:j} \in L_{p_{1:j}})$. Under the assumption that $P \in \texttt{MARKOV}$, one can observe that the minimal sufficient information to retain about the past of a pattern during a forward run is:
\begin{enumerate}
    \item The current position in the processed sequence.
    \item The last position where a symbol $ \sigma \in \Sigma$ has been encountered during the processing run. 
    \item The symbol that holds the position described in the previous point.
\end{enumerate}

To gain some intuition on the points discussed above, we provide an illustrative example:

$\bullet$ \textbf{Example:} Let $\Sigma = \{a,b\}$ be an alphabet, and $P \in \text{MARKOV}$. Let $p = a\#  a\#b$ be a pattern (the support is equal to $5$). 
Let's fix as a goal the computation of the quantity $P(w \in L_{a \# a \# b})$. Using Bayes' rule, we have 
$$P(w \in L_{a \# a \# b}) = P(w_{5} = b | w_{1} = a \land w_{3} = a) \cdot P(w_{3} = a | w_{1}=a) \cdot ¨P(w_{1} = a)$$
Since $P \in \text{MARKOV}$, $w_{5}$ is independent of $w_{1}$ given $w_{3}$. Thus, 
$$P(w \in L_{a \# a \# b}) = P(w_{5} = b | w_{3} = a) \cdot P(w_{3} = a | w_{1} = a) \cdot P(w_{1} = a)$$
Note that each product term in the right side of the equation depends only the current position, the last position where a symbol different than $\#$ has been encountered and the symbol found in this position. 

The points 2 and 3 are formalized by introducing the following two functions:  
\begin{itemize}
\item The $\texttt{pos}(.)$ function:
\begin{align}
    \texttt{pos} \colon \Sigma_{\#}^{*} & \longrightarrow \mathbb{N} 
    \\  
     p & \longrightarrow \max\limits_{i\in \{0,1, \ldots, |p|]} \{i \in \mathbb{N}:~~p_{i} \neq \#\}  \nonumber
\end{align}
\item The $\texttt{sym}(.)$ function:
\begin{align}
    \texttt{sym} \colon \Sigma_{\#}^{*} & \longrightarrow \Sigma \cup <BOS> \\
    p & \longrightarrow \begin{cases}
        <BOS> & \text{if  }\texttt{pos}(p) = 0 \\
        p_{\texttt{pos}(p)} & \text{elsewhere} \nonumber
    \end{cases}
\end{align}
\end{itemize}

 $\bullet$ \textbf{Example:} For the alphabet $\Sigma= \{a,b\}$ and the pattern $p =a\# a\#$. The last position held by a symbol in $\Sigma$ in $p$ is the position $3$. It is held by the symbol $a$. Consequently, for this example, we have $\texttt{pos}(p) = 3$, and $\texttt{sym}(p) = 'a'$. \\
For patterns that contain only the symbol $'\#'$, e.g. $p' = \#\#\#\#$, we have $\texttt{pos}(p') = 0$ and $\texttt{sym}(p') = \text{BOS}$.

Next, we shall see how to reformulate the equation \eqref{computesipw} using the functions $\texttt{pos}(.)$, and $\texttt{sym}(.)$.

For a given pattern $p$ in $\Sigma_{\#}^{*}$, define the language $\Tilde{L}_{p}$ over $\Sigma^{*}$ described as follows:
\begin{equation} \label{tildelp}
\Tilde{L_{p}} \myeq \{ w \in \Sigma^{|p|}:~~w_{\texttt{pos}(p)} = \texttt{sym}(p) \}
\end{equation}

By convention, if $\texttt{pos}(p) = 0$, $\Tilde{L}_{p}$ is equal to $\#^{|p|}$.

Given that $P \in \texttt{MARKOV}$, we have 
\begin{equation} \label{crucialobservation}
P( w'_{j+1} \in L_{p_{j+1}}| w'_{i:j} \in L_{p_{1:j}}) = P( w'_{j+1} \in L_{p_{j+1}} | w'_{1:j} \in \Tilde{L}_{p_{1:j}})
\end{equation}

At this stage, a key observation is that the quantity present in the right-hand side of the equation \eqref{crucialobservation} depends only on $P$, $p_{j+1}$, $\texttt{pos}(p_{1:j}),~\texttt{sym}(p_{1:j})$, and $j$. Indeed, by definition of the language $\Tilde{L}_{p}$ (equation \eqref{tildelp}), the language $\Tilde{L}_{p_{1:j}}$ depends only on these last three parameters. 

To make this dependency appearing explicitly, we shall introduce a definition of a new function  $\mathbf{G}$ given as follows:
\begin{equation} \label{G}
  \mathbf{G}\left(p_{j+1}, \texttt{pos}(p_{i,j}), \texttt{sym}(p_{1:j}), j,P \right) \myeq P( w'_{j+1} \in L_{p_{j+1}} | w' \in \Tilde{L}_{p_{1:j}})
\end{equation}

Using the equality \eqref{crucialobservation}, we can rewrite the constraint \eqref{computesipw} with this newly introduced notation as: 
\begin{small}
\begin{align} \label{yes}
    \forall p \in \Sigma_{\#}^{|w|}:~~f_{A_{w,i,P}} (p) = & \frac{1}{P_{init}(w_{1} \in L_{\texttt{swap}(p,i)_{1}})} \cdot \frac{1}{ \prod\limits_{j=1}^{|w|-1} \mathbf{G} (\texttt{swap}(p,i)_{j+1},  \texttt{pos} (\texttt{swap}(p,i)_{1:j}), \texttt{sym}(\texttt{swap}(p,i)_{1:j}), j, P)} 
\end{align}
\end{small}

Toward the stated objective of constructing a deterministic WA over $\Sigma_{\#}^{*}$ that computes a language satisfying the constraint \eqref{computesipw}, the expression \eqref{yes} offers a better reformulation of this equation by considering two aspects:
\begin{enumerate}
    \item The product terms forming the right-hand side of expression \eqref{yes} offers a compressed representation of the history of the processed pattern required to perform next processing operations, by maintaining only the current position in the sequence, the last symbol different that $\#$ encountered during the forward run and its position in the sequence. 
    \item The functions $\texttt{pos}(.)$ and $\texttt{sym}(.)$ can be easily simulated by a sequential machine that processes sequences from left-to-right, such as WAs. Specifically, for any pattern $p \in \Sigma_{\#}^{*}$ and a symbol $\sigma \in \Sigma$, we have 
    $$\texttt{pos}(p \sigma) = \begin{cases}
        \texttt{pos}(p) & \text{if }  \sigma = \# \\
        \texttt{pos}(p)+1 & \text{elsewhere}
    \end{cases}$$
    $$
      \texttt{sym}(p \sigma) = \begin{cases}
          \texttt{sym}(p) & \text{if  } \sigma = \# \\
          \sigma & \text{elsewhere}
      \end{cases}
    $$
\end{enumerate}
 We shall leverage these two insights to construct a DWA that simulates the computation of the expression \eqref{yes}. 
 
  Assume for now that the function $\mathbf{G}$ can be computed in polynomial time with respect to the input instance (this fact will be proved later in this segment), a polynomial-time construction of  $A_{w,i,P}$ that satisfies the constraint \eqref{twip}:

\begin{itemize}
    \item \textit{The state space:} $Q = \{0,1,\ldots,|w|\}^{2} \times (\Sigma \cup <BOS>)$. \\
    The semantics of the elements of a state $q = (k,l,\sigma) \in Q$ correspond to the current position in the sequence, $\texttt{pos}(.)$ and $\texttt{sym}(.)$, respectively.
    \item \textit{The initial state:} 
    $q_{init} = (0,0,<BOS>)$
    \item \textit{The transition function:} Let $q=(k,l,\sigma)$ be a state in $Q$:
    \begin{enumerate}
        \item \textit{Case 1 ($k = 0$):}
          \begin{itemize}
              \item Case 1.1. ($\sigma' = \#$)
                 $$W((0,0,<BOS>), \# ) = \left((1,0, <BOS>),  \frac{1}{P_{init}(w'_{1 } \in L_{\#})}\right)$$
               \item Case 1.2. ($\sigma' \in \Sigma$)
                $$W((0,0,<BOS>), \sigma') = \left((1,1, \sigma'), \frac{1}{P_{init}(w'_{1 } \in L_{\sigma'})} \right)$$
          \end{itemize} 
        \item \textit{Case 2. ($k = i-1$)} For any $\sigma' \in \Sigma_{\#}$
        $$W((i-1,l,\sigma), \sigma') = \left((i,l,\sigma),  \frac{1}{\mathbf{G}(\#, l, \sigma , k+1,P)}\right)$$
        \item \textit{Case 3 ($k \neq i-1 \land k \in [|w|-1]$):}
        \begin{itemize}
        \item \textit{Case 3.1. ($\sigma' =  \#$)} 
        $$W((k,l, \sigma), \#) =  \left((k+1,l,\sigma),\frac{1}{\mathbf{G}(\#, l, \sigma ,k+1, P)}\right) $$
             
        \item \textit{Case 3.2. ($\sigma' \in \Sigma$)}
        $$W((k,l, \sigma), \sigma') = \left( (k+1,k+1,\sigma')) = \frac{1}{\mathbf{G}(\sigma', l, \sigma ,k+1, P)} \right)$$
        \end{itemize}         
    \end{enumerate} 
    \item \textit{The set of final states:} $F = \{|w|\} \times \{0,1, \ldots, |w|\} \times (\Sigma \cup < \text{BOS}>)$ 
\end{itemize}

$\bullet$ \textbf{Note:} \textit{The case $k=i-1$  in the algorithmic construction outlined above corresponds to the case where the swap operation is taken into account. The adaption of this construction to algorithm $\mathcal{A}_{2}$ consists simply at omitting this case and considering only cases 1 and 3, where case $3$ covers the set $k \in [|w|-1]$. }

For illustrative purposes, we shall give next an example of the path followed by a pattern in the constructed DWA.

$\bullet$ \textbf{Example.} Fix the alphabet $\Sigma = \{a,b\}$ and $P \in \texttt{MARKOV}$. Let $w = aabab$ the instance to explain and the symbol for which we aim at computing the SHAP score is the third symbol, i.e. $i=3$.

Let's consider the pattern $p = \#\#aab$. 
By Bayes' rule, the probability of generating a sequence that follows the pattern $\texttt{swap}(p,3) = \#\#\#ab$ is equal to:
\begin{align*}
P(w' \in L_{\#\#\#ab}) &= P(w'_{5} \in L_{b} | w'_{1:4} \in L_{\#\#\#a} ) \cdot P(w'_{4} = L_{a}| w'_{1,3} \in L_{\#\#\#}) \cdot P(w'_{3} \in L_{\#}|w'_{1,2} \in L_{\#\#}) \\
&~~\cdot P(w'_{2} \in L_{\#} | w'_{1} \in L_{\#}) \cdot P(w'_{1} \in L_{\#})\\
&= \textbf{G}(b, 4, a, 5, P) \cdot \textbf{G}(a, 0, <\text{BOS}>, 4, P) \cdot \textbf{G}(\#, 0, <\text{BOS}>, 3,P) \cdot \textbf{G}(\#,0,<\text{BOS}>,2,P)  \\
&~~ \cdot P_{init}(w'_{1} \in L_{\#})
\end{align*}

 $\textbf{G}(b, 4, a,5,P)$ holds the semantics of the conditional probability of generating the symbol $b$ at position $5$ given that the symbol $a$ is generated at position $4$. Similarly, $\textbf{G}(a, 0, <\text{BOS}>, 4,P)$ holds the semantics of the marginal probability of generating the symbol a at position $4$.

The unique path followed by the pattern $p$ on the DWA constructed above is: 
$$(0,0, <\text{B0S}>)~\#~(1,0,<\text{BOS}>)~\#~(2,0,<BOS>)~a~(3,0,<BOS>)~a~(4,4,a)~b~(5,5,b)$$
 
The weight assigned to this path by the constructed DWA is equal to $$\frac{1}{P_{init}(w'_{1} \in L_{\#})} \cdot \frac{1}{\mathbf{G}(\#,0, < \text{BOS} >, 2, P )} \cdot \frac{1}{\mathbf{G}(\#,0, < \text{BOS} >, 3, P )} \cdot \frac{1}{\textbf{G}(a,0, < \text{BOS} >, 4, P )} \cdot \frac{1}{\mathbf{G}(b,4, a, 5, P )}$$

By noting that for any $\sigma \in \Sigma_{\#}:~~P_{init}(w' \in L_{\#}) = \mathbf{G}(\sigma, 0, < \text{BOS} >, 1, P)$,  the weight of this path is equal to $\frac{1}{P(w' \in L_{\#\#\#ab})} = \frac{1}{P(w' \in L_{\texttt{swap}( \#\#aab,3)})}$.

$\bullet$ \textbf{Computation of the function $\mathbf{G}$.} \\
In order for this constructed DWA to run in time polynomial in the size of its input instance, a necessary and sufficient condition is that the computation of the function $\mathbf{G}$, can also be performed in polynomial time. We shall prove next that this last statement is true.

Formally, the computational problem associated to the function $\mathbf{G}$ is given as follows:

$\bullet$ \textbf{Problem:} The computational problem $\mathbf{G}$ \\
\textbf{Instance:} $\sigma' \in \Sigma_{\#}$, two integers $n,~m > 0$ such that $n < m$, a symbol $\sigma \in \Sigma \cup <\text{BOS}>$ and $P \in \texttt{MARKOV}$.  \\
\textbf{Output:} Compute $G(\sigma',n, \sigma,m,P)$ (equation \eqref{G}).

For an input instance $<\sigma',n,\sigma,m,P>$, the quantity $\mathbf{G}(\sigma', n, \sigma, m,P)$ refers to the conditional probability of generating a symbol in $L_{\sigma'}$ at position $m$ given that the symbol $\sigma$ has been generated at position $n$. In essence, the  computational problem $\mathbf{G}$ is reduced to the classical problem of inference in Bayesian Networks \cite{koller}. In general, the exact inference in Bayesian Networks is intractable \cite{mustapha10}. However, in our case, leveraging the Markovian structure of the probability distribution enables building a tractable solution for the problem using a dynamic programming approach.  

Fix an input instance $<\sigma',n,\sigma,m,P>$ of the problem $\mathbf{G}$. Define the random vector $(X_{n},\ldots, X_{m})$ that takes values over the set $\Sigma^{m-n}$. Its joint probability distribution is given as follows:
$$Q(\sigma_{n}, \ldots, \sigma_{m}) = Q_{init}(\sigma_{n}) \cdot  \prod\limits_{i=n}^{m-1} P_{i}( \sigma_{i+1} | \sigma_{i})$$
such that 
$$Q_{init}(\sigma_{n}) = \begin{cases}
    1 & \text{if } \sigma_{n} = \sigma \\
    0 & \text{elsewhere}
\end{cases}$$
It's easy to observe that 
\begin{equation} \label{GwithQ}
G(\sigma', n, \sigma, m, P) = Q(X_{m} \in L_{\sigma'})
\end{equation}

If $\sigma' = \#$, the computation of $G(\sigma', n, \sigma,m,P)$ is trivial. Indeed, the fact that $L_{\#} = \Sigma$ and $Q(X_{m} \in \Sigma) = 1$ entail, by equation \eqref{GwithQ} that $G(\#,n, \sigma,m,P) = 1$.

For the general case $\sigma' \in \Sigma$, a recursive formula to compute $G(\sigma',n,\sigma,m,P)$ can be obtained, using Bayes' rule, as follows:
\begin{align}
    \mathbf{G}(\sigma' , n,\sigma,m,P) &= Q(X_{m} = \sigma') \\ \nonumber
    &= \sum\limits_{\Tilde{\sigma} \in \Sigma} Q(X_{m-1}=\Tilde{\sigma} \land X_{m} = \sigma) \\ \nonumber
    &= \sum\limits_{\Tilde{\sigma} \in \Sigma} Q(X_{m} = \sigma | X_{m-1} = \Tilde{\sigma}) \cdot Q(X_{m-1} = \Tilde{\sigma}) \\ \nonumber
    &= \sum\limits_{\Tilde{\sigma} \in \Sigma} P_{m-1}(\sigma|\Tilde{\sigma}) \cdot G(\Tilde{\sigma}, n, \sigma, m-1,P ) \label{recursive}
\end{align}

This last equation provides a recursive formula that enables the computation of $\mathbf{G}$ using a dynamic programming approach. The outline of this approach is given as follows:
\begin{itemize}
    \item \textbf{Base case:} $m = n + 1$
    \begin{enumerate}
        \item If $n = 0$:
              $$\mathbf{G}(\sigma', 0, \sigma, m, P) = P_{init}(\sigma')$$
        \item If $n>0$:
    $$\mathbf{G}(\sigma', n, \sigma, m, P) = P_{n+1}(\sigma'|\sigma)$$
    \end{enumerate}
    \item \textbf{General case:} $m > n+1$
    $$ \mathbf{G}(\sigma' , n,\sigma,m,P) = \sum\limits_{\Tilde{\sigma} \in \Sigma} P_{m-1}(\sigma|\Tilde{\sigma}) \cdot G(\sigma', n, \sigma, m-1,P ) $$
\end{itemize}

The complexity of this dynamic programming algorithm is $O(m . |\Sigma|)$.

\section{Proof of lemma \ref{ddnfstowa}} \label{dnftowa}
Lemma \ref{ddnfstowa} states the existence of an algorithm that takes as input a d-DNF $\Phi$, runs in $O(\texttt{poly}(|\Phi|, |\Phi|_{\#}))$, and outputs a WA that implements the language $L_{\Phi}$. Recall that $L_{\Phi}$ is defined as 
$$L_{\Phi} = \{w \in \Sigma^{|\Phi|}: \texttt{SEQ}^{-1}(w)~~\text{satisfies}~~\Phi\}$$

The unweighted language $L_{\Phi}$ includes the set of satisfying variable assignments of the boolean variables arranged in a sequence format.

The structure of the algorithm that performs this task follows two steps:
\begin{enumerate}
  \item Encode every clause $C$ in the input d-DNF in the form of a DFA. The resulting DFA accepts the language $L_{C}$.
  \item Perform a union operation over all these DFAs to obtain a resulting WA. The key observation at the heart of this step is that, for the case of disjoint DNFs the union operation can be performed using a basic sum operation over DFAs constructed in the first step.
\end{enumerate}
Next, we shall provide details of these two steps of the algorithmic construction. 

\subsection{Step 1: Encoding clauses as DFAs.}
 The basic intuition for performing this step is  that an equivalent representation of the language accepted by a clause can be alternatively represented  by a pattern of length $|p|$. On the other hand, a pattern of length $|p|$ can be implemented using a DFA of size at most $|p| + 1$. 

Let $C = l_{1} \land \ldots \land l_{k}$ be a conjunctive clause over $N$ boolean variables. We shall denote by $L_{C}$ the set of satisfying variable assignments of the clause $C$ arranged in a sequence format.

The construction of a pattern $p$ such that $L_{p} = L_{C}$ can be performed by scanning the literals of the clause $C$ from left-to-right. Assume that the clause $C$ doesn't possess a variable and its negation in its set of literals \footnote{Clauses that exhibit this degenerate case can be checked and removed before running the algorithmic schema outlined here.}. The algorithmic schema is given as follows:  

$1$ Initialize a pattern $p$ as $\#^{N}$ \\
$2$ For each literal $l_{i}$ in $C$: \\
$~~~~~~~$- If $l_{i}$ corresponds to a variable $X_{k}$, then set $p_{k} = 1$ \\
$~~~~~~~$ - If $l_{i}$ corresponds to the negation a variable $\bar{X}_{k}$, then set $p_{k} = 0$ \\

The algorithmic schema ensures that the language of the outputted pattern accepts all and only sequences that satisfy the constraints enforced by all literals of the clause.  

The pattern construction of a clause in a d-DNF $\Phi$ as well as its conversion to a DFA can be performed in $O(|\Phi|)$ time. And, the size of  resulting DFA is $O(|\Phi|)$. Repeating the same operation over all clauses of $\Phi$ runs in $O(|\Phi| \cdot |\Phi|_{\#})$ time. 

$\bullet$ \textbf{Example:} The pattern associated to $C = X_{2} \land \bar{X}_{4} \land \bar{X}_{3}$ over the set of boolean variables $\{X_{1}, X_{2}, X_{3}, X_{4}, X_{5}\}$ is $\#101\#$.

\subsection{Step 2: The union of DFAs representing clauses.}
Fix a d-DNF $\Phi = C_{1} \lor \ldots \lor C_{M}$ over $N$ boolean variables. Let $A_{1},~\ldots,~A_{M}$ be a collection of DFAs (outputted by step 1) that accept the languages $L_{C_{1}},~\ldots,~L_{C_{M}}$, respectively. The main problem of this step is how to exploit DFAs outputted in the first step to construct a WA $A$ such that $I_{L_{\Phi}} = f_{A}$. 

The main intuition at this point is to note that for a d-DNF, $I_{L_{\Phi}}$ can be expressed as as a sum of the indicator functions of $\{I_{L_{C_{i}}}\}_{i \in [M]}$. Since $f_{A_{i}} = I_{L_{C_{i}}}$ for any $i \in [M]$, then $f_{A}$ can be computed as a sum over languages computed by DFAs . This observation will result into a reduction of the problem of constructing  a WA $A$ that computes $I_{L_{\Phi}}$ into performing a sum operation over a collection of DFAs. Fortunately, WAs are closed under the sum operation. Moreover, it can be computed in polynomial time with respect to the size. 
\begin{lemma} \label{inclusionexclusion}
    Let $\Phi = C_{1} \lor \ldots \lor C_{M}$ be a disjoint DNF over $N$ boolean variables. We have:
    $$I_{L_{\Phi}} = \sum\limits_{i \in [M]}I_{L_{C_{i}}}$$
\end{lemma}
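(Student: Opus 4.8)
The plan is to prove the identity pointwise: fix an arbitrary $w \in \Sigma^{|\Phi|}$ (equivalently, an arbitrary assignment $\overrightarrow{X} = \texttt{SEQ}^{-1}(w)$) and show that $I_{L_{\Phi}}(w) = \sum_{i\in[M]} I_{L_{C_i}}(w)$. Since both sides are determined by whether $w$ lands in the relevant languages, this reduces to a counting statement about how many clauses $w$ satisfies.

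First I would recall that, by the definition of $L_{\Phi}$, we have $I_{L_{\Phi}}(w) = 1$ exactly when $\overrightarrow{X}$ satisfies $\Phi$, and since $\Phi = C_1 \lor \ldots \lor C_M$, this happens exactly when $\overrightarrow{X}$ satisfies at least one clause $C_i$; otherwise $I_{L_{\Phi}}(w) = 0$. On the other side, $\sum_{i\in[M]} I_{L_{C_i}}(w)$ simply counts the number of clauses satisfied by $\overrightarrow{X}$ (each term is $1$ if $w \in L_{C_i}$, i.e. if $\overrightarrow{X}$ satisfies $C_i$, and $0$ otherwise). So the two sides agree as soon as we know that the number of clauses satisfied by any assignment is either $0$ or $1$.

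The key step — and the only place where the \emph{disjointness} hypothesis is used — is precisely this: for a disjoint DNF, no assignment can satisfy two distinct clauses simultaneously, because the defining property of a d-DNF states that for any two distinct clauses $C_i, C_j$ the sets of satisfying assignments are disjoint (equivalently, for any input combination only one clause evaluates to true). Hence the count $\sum_{i} I_{L_{C_i}}(w)$ is $\le 1$ for every $w$. Combining: if $\overrightarrow{X}$ satisfies no clause, both sides are $0$; if it satisfies exactly one clause, both sides are $1$; the case "satisfies $\ge 2$ clauses" is excluded by disjointness. This exhausts all cases, so the identity holds for every $w$, hence as languages over $\Sigma^{|\Phi|}$.

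I do not anticipate a genuine obstacle here — the statement is essentially unwinding definitions plus the observation that disjointness forbids double-counting. The only mild care needed is to be explicit that $L_{\Phi}$ and each $L_{C_i}$ are languages over the same support $\Sigma^{|\Phi|} = \{0,1\}^{|\Phi|}$ (via the $\texttt{SEQ}$ encoding), so that the pointwise equality of indicator functions is meaningful and the sum on the right is well-defined. It may also be worth noting in passing that this identity is exactly what licenses Step 2's reduction to a sum of DFAs, since $f_{A_i} = I_{L_{C_i}}$ and WAs are closed under sum.
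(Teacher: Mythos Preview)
Your proposal is correct and follows essentially the same approach as the paper: a pointwise case analysis on an arbitrary $w$, using the disjointness of the clauses to conclude that at most one $I_{L_{C_i}}(w)$ can equal $1$. The paper phrases it as two cases ($w \notin L_\Phi$ versus $w \in L_\Phi$) while you phrase it as a counting observation, but the argument is the same.
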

\begin{proof}
    Let  $\Phi = C_{1} \lor \ldots \lor C_{M}$ be a disjoint DNF over $N$ boolean variables. Let $w$ be an arbitrary sequence in $\{0,1\}^{n}$. Our claim is that $I_{L_{\Phi}}(w) = \sum\limits_{i \in [M]} I_{L_{C_{i}}}(w)$. Note that $L_{\Phi} = \bigcup\limits_{i=1}^{M} L_{C_{i}}$ by definition of $\Phi$. Also, $\bigcap\limits_{i=1}^{M} C_{i} = \emptyset$ by the disjoint property of $\Phi$.
    \begin{itemize}
        \item \textit{Case 1 ($w \notin L_{\Phi}$):} This implies that $I_{L_{\Phi}}(w) = 0$.  On the other hand, $w \notin L_{\Phi}$ and  $L_{\Phi} = \bigcup\limits_{i=1}^{M} L_{C_{i}}$ implies that 
        $$\forall i \in [M]: w \notin L_{C_{i}} \implies \forall i \in [M]: I_{L_{C_{i}}}(w) = 0 \implies \sum\limits_{i=1}^{M} I_{L_{C_{i}}}(w)=0$$
        \item \textit{Case 2 ($w \in L_{\Phi}$):} In this case, $I_{L_{\Phi}}(w) = 1$.
        On the other hand, $L_{\Phi} = \bigcup\limits_{i=1}^{M} L_{C_{i}}$ implies that there exists at least one clause $C_{i}$ such that $w \in L_{C_{i}}$. This fact combined with the fact that  $\bigcap\limits_{i=1}^{M} C_{i} = \emptyset$ implies that this clause is unique. Denote by $C^{*}$ this clause. We have $I_{L_{C^{*}}}(w) = 1$. And, $I_{L_{C}}(w) = 0$ for any clause $C \in \{C_{i}\}_{i \in [M]} \setminus C^{*}$. Consequently, $\sum\limits_{i=1}^{M} I_{L_{C_{i}}}(w) = 1$.
    \end{itemize}
\end{proof}
The result of lemma \ref{inclusionexclusion} implies that a WA $A$ that computes the language $I_{L_{\phi}}$ satisfies: 
\begin{equation} \label{fA}
f_{A} = \sum\limits_{i=1}^{M} I_{L_{C_{i}}} = \sum\limits_{i=1}^{M} f_{A_{i}}
\end{equation}

For two WAs $A_{1}=<\alpha, \{A_{\sigma}\}_{\sigma \in \Sigma} \beta>$ and $A_{2} = <\alpha', \{A'_{\sigma}\}_{\sigma \in \Sigma}, \beta'>$, the  WA whose set of parameters is given as 
$$<\begin{pmatrix} \alpha \\\alpha' \end{pmatrix} , \{\begin{pmatrix}
    A_{\sigma} & \mathbf{O}_{\texttt{size}(A_{1}) \times \texttt{size}(A_{2})} \\
    \mathbf{O}_{\texttt{size}(A_{2}) \times \texttt{size}(A_{1})} & A'_{\sigma}
\end{pmatrix}\}_{\sigma \in \Sigma}, \begin{pmatrix}
    \beta \\ 
    \beta'
\end{pmatrix}>$$
where $\mathbf{0}_{n \times m}$ is the zero matrix in $\mathbb{R}^{n \times m}$,
computes the language $f_{A} + f_{A'}$. 

The resulting WA runs in $O(\texttt{size}(A_{1}) + \texttt{size}(A_{2}))$ time, and has size equal to $\texttt{size}(A_{1}) + \texttt{size}(A_{2})$.

Hence, the construction of the target WA $A$ by  performing the sum operation over $\{A_{i}\}_{i \in [M]}$ as outlined by equation \eqref{fA} would take $O(\sum\limits_{i=1}^{M} \texttt{size}(A_{i}))$ operations. Since the DFAs $\{A_{i}\}_{i \in [M]}$ have size equal to $O(|\Phi|)$. Then, the overall operation runs in $O(|\Phi| \cdot |\Phi|_{\#})$.

\end{document}